\newtheorem{lemma}{Lemma}[section]
\newtheorem{proposition}{Proposition}[section]
\newtheorem{theorem}{Theorem}[section]
\newtheorem{assumption}{Assumption}[section]
\title{Risk-sensitive Reinforcement Learning\footnote{Accepted for publication in \emph{Neural Computation.}}}
\author{Yun Shen$^{\displaystyle 1}$, Michael J.\ Tobia$^{\displaystyle 3}$, Tobias Sommer$^{\displaystyle 3}$, Klaus Obermayer$^{\displaystyle 1, \displaystyle 2}$ \\
{$^{\displaystyle 1}$Technical University Berlin, Germany}\\
{$^{\displaystyle 2}$Bernstein Computational Neuroscience Center, Berlin, Germany}\\
{$^{\displaystyle 3}$}University Medical Center Hamburg-Eppendorf, Germany}
\date{January 23, 2014}
\begin{document}

\maketitle

\begin{abstract}

We derive a family of risk-sensitive reinforcement learning methods for agents, who face sequential decision-making tasks in uncertain environments. By applying a utility function to the temporal difference (TD) error, nonlinear transformations are effectively applied not only to the received rewards but also to the true transition probabilities of the underlying Markov decision process. When appropriate utility functions are chosen, the agents' behaviors express key features of human behavior as predicted by prospect theory \citep{kahneman1979prospect}, for example different risk-preferences for gains and losses as well as the shape of subjective probability curves. We derive a risk-sensitive Q-learning algorithm, which is necessary for modeling human behavior when transition probabilities are unknown, and prove its convergence. As a proof of principle for the applicability of the new framework we apply it to quantify human behavior in a sequential investment task. We find, that the risk-sensitive variant provides a significantly better fit to the behavioral data and that it leads to an interpretation of the subject's responses which is indeed consistent with prospect theory. The analysis of simultaneously measured fMRI signals show a significant correlation of the risk-sensitive TD error with BOLD signal change in the ventral striatum. In addition we find a significant correlation of the risk-sensitive Q-values with neural activity in the striatum, cingulate cortex and insula, which is not present if standard Q-values are used. 

\end{abstract}

\section{Introduction}
Risk arises from the uncertainties associated with future events, and is inevitable since the consequences of  actions are uncertain at the time when a decision is made. Hence, risk has to be taken into account by the decision-maker, consciously or unconsciously. An economically rational decision-making rule, which is \emph{risk-neutral}, is to select the alternative with the highest expected reward. In the context of sequential or multistage decision-making problems, \emph{reinforcement learning} (RL, \citealp{sutton1998reinforcement}) follows this line of thought. It describes how an agent ought to take actions that maximize expected cumulative rewards in an environment typically described by a \emph{Markov decision process} (MDP, \citealp{puterman1994markov}).
RL is a well-developed model not only for human decision-making, but also for models of free choice in non-humans, because similar computational structures, such as dopaminergically mediated reward prediction errors, have been identified across species (\citealp{schultz1997neural,schultz2002getting}). 

Besides risk-neutral policies, \emph{risk-averse} policies, which accept a choice with a more certain but possibly lower expected reward, are also considered economically rational \citep{gollier2004economics}. For example, a risk-averse investor might choose to put money into a bank account with a low but guaranteed interest rate, rather than into a stock with possibly high expected returns but also a chance of high losses. Conversely, \emph{risk-seeking} policies, which prefer a choice with less certain but possibly high reward, are considered economically irrational. 
Human agents are, however, not always economically rational \citep{gilboa2009theory}. Behavioral studies show that human can be risk-seeking in one situation while risk-averse in another situation  \citep{kahneman1979prospect}. RL algorithms developed so far cannot effectively model these complicated risk-preferences.

Risk-sensitive decision-making problems, in the context of MDPs, have been investigated in various fields, e.g., in machine learning \citep{heger1994consideration, mihatsch2002risk}, optimal control \citep{hernandez1996risk}, operations research \citep{howard1972risk, borkar2002q}, finance \citep{ruszczynski2010risk}, as well as neuroscience \citep{nagengast2010risk,braun2011risk,niv2012neural}. 
Note that the core of MDPs consists of two sets of \emph{objective} quantities describing the environment: immediate \emph{rewards} obtained at states by executing actions, and \emph{transition probabilities} for switching states when performing actions. Facing the same environment, however, different agents might have different policies, which indicates that risk is taken into account differently by different agents. Hence, to incorporate risk, which is derived from both quantities, all existing literature applies a nonlinear transformation to either the experienced reward values or to the transition probabilities, or to both. The former is the canonical approach in classical economics, as in expected utility theory \citep{gollier2004economics}, while the latter originates from behavioral economics, as in \emph{subjective probability} \citep{savage1972foundations}, but is also derived from a rather recent development in mathematical finance, \emph{convex/coherent risk measures} \citep{artzner1999coherent, follmer2002convex}. For modeling human behaviors, prospect theory (\citealp{kahneman1979prospect}) suggests that we should combine both approaches, i.e., human beings have different perceptions not only for the same objective amount of rewards but also the same value of the true probability.  
Recently, \cite{niv2012neural} combined both approaches by applying piecewise linear functions (an approximation of a nonlinear transformation) to reward prediction errors that contain the information of rewards directly and the information of transition probabilities indirectly. Importantly, the reward prediction errors that incorporated experienced risk were strongly coupled to activity in the nucleus accumbens of the ventral striatum, providing a biologically based plausibility to this combined approach. In this work we show (in Section 2.1) that the risk-sensitive algorithm proposed by Niv and colleagues is a special case of our general risk-sensitive RL framework.

Most of the literature in economics or engineering fields focuses on economically rational risk-averse/-neutral strategies, which are not always adopted by humans. The models proposed in behavioral economics, despite allowing economic irrationality, require knowledge of the true probability, which usually is not available at the outset of a learning task. In neuroscience, on the one hand, several works (e.g., \citealp{wu2009economic, preuschoff2008human}) follow the same line as in behavioral economics and require knowledge of the true probability. On the other hand, though different modified RL algorithms (e.g., \citealp{glimcher2008neuroeconomics, symmonds2011deconstructing}) are applied to model human behaviors in learning tasks, the algorithms often fail to generalize across different tasks. In our previous work \citep{Shen2013}, we described a general framework for incorporating risk into MDPs by introducing nonlinear transformations to both rewards and transition probabilities. A risk-sensitive objective was derived and optimized by value iteration or dynamic programming. This solution, hence, does not work in learning tasks where the true transition probabilities are unknown to learning agents. For this purpose, a model-free framework for RL algorithms is to be derived in this paper, where, similar to Q-learning, the knowledge of the transition and reward model is not needed.

This paper is organized as follows. 
Section \ref{sec:valfunc} starts with a mathematical introduction into \emph{valuation functions} for measuring risk. We then specify a sufficiently rich class of valuation functions in Section \ref{sec:ubsf} and provide the intuition behind our approach by applying this class to a simple example in Section \ref{sec:toyex}. We aslo show that key features of prospect theory can be captured by this class of valuation functions. Restricted to the same class, we derive a general framework for risk-sensitive Q-learning algorithms and prove its convergence in Section \ref{sec:rsmdps}. Finally, in Section \ref{sec:ex}, we apply this framework to quantify human behavior. We show that the risk-sensitive variant provides a significantly better fit to the behavioral data and significant correlations are found between sequences generated by the proposed framework and changes of fMRI BOLD signals.

\section{Valuation Functions and Risk Sensitivities}
\label{sec:valfunc}
Suppose that we are facing choices. Each \emph{choice} might yield different outcomes when events are generated by a random process. Hence, to keep generality, we model the outcome of each choice by a real-valued random variable $\{ X(i), \mu(i) \}_{i \in I}$, where $I$ denotes an \emph{event space} with a finite cardinality $\lvert I \rvert$ and $X(i) \in \mathbb R$ is the outcome of $i$th event with probability $\mu(i)$. 
We say two vectors $X \leq Y$ if $X(i) \leq Y(i)$ for all $i \in I$. Let $\mathbf 1$ (resp.~$\mathbf 0$) denote the vector with all elements equal 1 (resp.~0). Let $\mathscr P$ denote the space of all possible distributions $\mu$.

Choices are made according to their outcomes. Hence, we assume that there exists a mapping $\rho: \mathbb R^{\lvert I \rvert} \times \mathscr P \rightarrow \mathbb R$ such that  one prefers $(X,\mu)$ to $(Y, \nu)$ whenever $\rho(X,\mu) \geq \rho(Y,\nu)$. We assume further that $\rho$ satisfies the following axioms inspired by the \emph{risk measure theory} applied in mathematical finance \citep{artzner1999coherent, follmer2002convex}. A mapping $\rho: \mathbb R^{\lvert I \rvert} \times \mathscr P \rightarrow \mathbb R$ is called a \textbf{valuation function}, if it satisfies for each $\mu \in \mathscr P$,
\begin{itemize}
 \item[I]  (monotonicity) $\rho(X, \mu) \leq \rho(Y, \mu)$, whenever $X\leq Y \in \mathbb R^{\lvert I \rvert}$; 
 \item[II]  (translation invariance) $\rho(X + y \mathbf 1, \mu) = \rho(X, \mu) + y$, for any $y \in \mathbb R$.
\end{itemize}

Within the economic context, $X$ and $Y$ are outcomes of two choices. Monotonicity reflects the intuition that given the same event distribution $\mu$, if the outcome of one choice is \emph{always} (for all events) 
higher than the outcome of another choice, the \emph{valuation} of the choice must be also higher. Under the axiom of translation invariance, the sure outcome $y \mathbf 1$ (equal outcome for every event) after executing decisions, is considered as a sure outcome before making decision. This also reflects the intuition that there is no risk if there is no uncertainty.

In our setting, valuation functions are not necessarily centralized, i.e.\ $\rho(\mathbf 0, \mu)$ is not necessarily 0, since $\rho(\mathbf 0, \mu)$ in fact sets a reference point, which can differ for different agents. However, we can centralize any valuation function by $\tilde \rho(X, \mu) := \rho (X, \mu) - \rho(\mathbf 0, \mu)$. From the two axioms, it follows that (for the proof see Lemma \ref{lm:inrm} in Appendix)
\begin{align}
  \min_{i \in I} X_i =: \underline{X} \leq \tilde \rho(X, \mu) \leq \overline{X}:= \max_{i \in I} X_i, \forall \mu \in \mathscr P, X \in \mathbb R^{\lvert I \rvert}. \label{eq:range}
\end{align}
$\overline{X}$ is the possibly largest outcome, which represents the most optimistic prediction of the future, while $\underline{X}$ is the possibly smallest outcome and the most pessimistic estimation. The centralized valuation function $\tilde \rho(X, \mu)$ satisfying $\tilde \rho(0,\mu) = 0$ can be in fact viewed as a subjective mean of the random variable $X$, which varies from the best scenario $\overline{X}$ to the worst scenario $\underline{X}$, covering the objective mean as a special case.

To judge the risk-preference induced by a certain type of valuation functions, we follow the rule that \emph{diversification} should be preferred if the agent is \emph{risk-averse}. More specifically, suppose an agent has two possible choices, one of which leads to the future reward $(X,\mu)$ while the other one leads to the future reward $(Y,\nu)$. For simplicity we assume $\mu = \nu$. If the agent \emph{diversifies}, i.e., if one spends only a fraction $\alpha$ of the resources on the first and the remaining amount on the second alternative, the future reward is given by $\alpha X+ (1-\alpha)Y$. If the applied valuation function is concave, i.e., 
$$\rho(\alpha X+ (1-\alpha)Y, \mu) \geq  \alpha \rho(X, \mu) + (1-\alpha)\rho(Y, \mu),$$
for all $\alpha \in [0,1]$ and $X, Y \in \mathbb R^{\lvert I \rvert},$
then the diversification should increase the (subjective) valuation. Thus, we call the agent's behavior \emph{risk-averse}. Conversely, if the applied valuation function is \emph{convex}, the induced risk-preference should be \emph{risk-seeking}.

\subsection{Utility-based Shortfall}
\label{sec:ubsf}
We now introduce a class of valuation functions, the utility-based shortfall, which generalizes many important special valuation functions in literature. Let $u: \mathbb R \rightarrow \mathbb R$ be a \emph{utility function}, which is continuous and strictly increasing. The shortfall $\rho_{x_0}^{u}$ induced by $u$ and an \emph{acceptance level} $x_0$ is then defined as
\begin{align}
 \rho_{x_0}^{u}(X, \mu) := \sup\left\{ m \in \mathbb R \ | \ \sum_{i \in I} u(X(i) - m) \mu(i) \geq x_0 \right\}, \label{eq:sh}
\end{align}
It can be shown (cf.~\citealp{follmer2004stochastic}) that $\rho_{x_0}^{\textrm{u}}$ is a valid valuation function satisfying the axioms. The utility-based shortfall was first introduced in the mathematical finance literature \citep{follmer2004stochastic}. The class of utility functions considered here will, however, be more general than the class of utility functions typically used in finance.

Comparing with the expected utility theory, the utility function in Eq.~\eqref{eq:sh} is applied to the relative value $X(i)-m$ rather than to the absolute outcome $X(i)$. This reflects the intuition that human beings judge utilities usually by comparing those outcome with a reference value
which may not be zero. 
The property of $u$ being convex or concave determines the risk sensitivity of $\rho_{x_0}^{u}$: given a concave function $u$, $\rho$ is also concave and hence risk-averse (see Theorem 4.61, \citealp{follmer2004stochastic}). Vice versa, $\rho$ is convex (hence risk-seeking) for convex $u$.

Utility-based shortfalls cover a large family of valuation functions, which have been proposed in literature of various fields. 
\begin{itemize}
 \item[(a)] For $u(x) = x$ and $x_0 = 0$, one obtains the standard expected reward $\rho(X, \mu) = \sum_{i} X(i) \mu(i)$.
 \item[(b)] For $u(x) = e^{\lambda x}$ and $x_0 = 1$, one obtains $\rho(X, \mu) = \frac{1}{\lambda} \log \left[ \sum_{i} \mu(i) e^{\lambda X(i)} \right]$ (the so called \emph{entropic map}, see e.g.~\citealp{cavazos2010optimality} and references therein). Expansion w.r.t.~$\lambda$ leads to
 \begin{align*}
  \rho(X, \mu) = \mathbb E^\mu [X] + \lambda \textrm{Var}^\mu[X] + O(\lambda^2)
 \end{align*}
where $\textrm{Var}^\mu[X]$ denotes the variance of $X$ under the distribution $\mu$. Hence, the entropic map is risk-averse if $\lambda<0$ and risk-seeking if $\lambda >0$. In neuroscience, \cite{nagengast2010risk} and \cite{braun2011risk} applied this type of valuation function to test risk-sensitivity in human sensorimotor control.
 \item[(c)] \cite{mihatsch2002risk} proposed the following setting 
\begin{align*}
 u(x) = \left\lbrace 
\begin{array}{ll}
 (1-\kappa) x & \textrm{ if } x > 0 \\
 (1 + \kappa) x & \textrm{ if } x \leq 0 
\end{array}
 \right., 
\end{align*}
where $\kappa \in (-1,1)$ controls the degree of risk sensitivity. Its sign determines the property of the utility function $u$ being convex vs.~concave and, therefore, the risk-preference of $\rho$. In a recent study, \cite{niv2012neural} applied this type of valuation function to quantify risk-sensitive behavior of human subjects and to interpret the measured neural signals.
\end{itemize}
When quantifying human behavior, combined convex/concave utility functions, e.g.,
\begin{align}
 u_p(x) = \left\{
 \begin{array}{ll}
  k_+ x^{l_+} & x  \geq 0 \\
  - k_- (-x)^{l_-} & x < 0
 \end{array}
 \right., 
 \label{eq:utilfunc}
\end{align}
are of special interest, since people tend to treat gains and losses differently and, therefore, have different risk preferences on gain and loss sides. In fact, the polynomial function in Eq.~\eqref{eq:utilfunc} was used in the prospect theory \citep{kahneman1979prospect} to model human risk preferences and the results show that $l_+$ is usually below 1, i.e., $u_p(x)$ is concave and thus risk-averse on gains, while $l_-$ is also below 1 and $u_p(x)$ is therefore convex and risk-seeking on losses.

\subsection{Utility-based Shortfall and Prospect Theory} \label{sec:toyex}
To illustrate the risk-preferences induced by different utility functions, we consider a simple example with two events. The first event has outcome $x_1$ with probability $p$, while the other event has smaller outcome $x_2 < x_1$ with $1-p$. Note that $p = \frac{\mathbb E X - x_2}{x_1 - x_2}$, where $\mathbb E X = p x_1 + (1-p) x_2$ denotes the risk-neutral mean. 

Replacing $\mathbb E X$ with the \emph{subjective mean} $\tilde \rho(X, p) = \rho(X, p) - \rho(0, p)$ defined in Eq.~\eqref{eq:range},  
we can define a \emph{subjective probability} (cf.\ \cite{tversky1992advances}) as 
\begin{align}
 w(p) := \frac{\tilde \rho(X, p) - x_2}{x_1 - x_2}, \label{eq:subprob}
\end{align}
which measures agents' subjective perception of the true probability $p$.  

In risk-neutral cases, $\tilde \rho(X, p)$ is simply the mean and $w(p) = p$. In risk-averse cases, the balance moves towards the worst scenario. Hence, the probability of the first event (with larger outcome $x_1$) is always underestimated. On the contrary, in risk-seeking cases, the probability of the first event is always overestimated. Behavioral 
studies show that human subjects usually overestimate low probabilities and underestimate high probabilities \citep{tversky1992advances}. This can be quantified by applying mixed valuation functions $\rho$. If we apply utility-based shortfalls, it can be quantified by using mixed utility function $u$.

\begin{figure}[ht]
 \centering
   \includegraphics[width=0.8\textwidth]{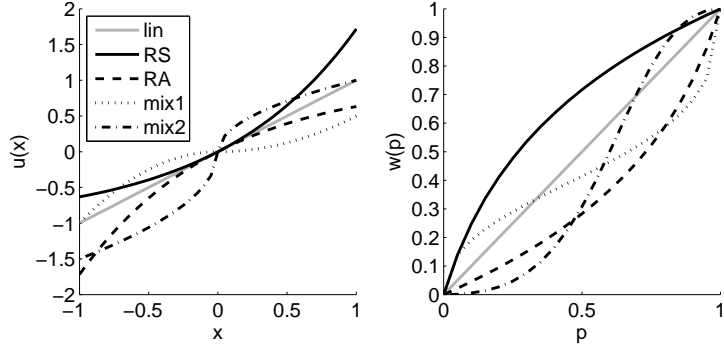}
\caption{Shortfalls with different utility functions and induced subjective probabilities. (Left) utility functions defined as follows: $\textrm{lin}: x; \textrm{RS}: e^{x} - 1; \textrm{RA}: 1 - e^{-x}$; $\textrm{mix1: } u_p(x)$ as defined in Eq.~\eqref{eq:utilfunc} with $k_+ = 0.5$, $l_+ = 2$, $k_- = 1$ and $l_- = 2$; mix2: same as mix1 but with $k_+ = 1$, $l_+ = 0.5$, $k_- = 1.5$ and $l_- = 0.5$. (Right) subjective probability functions calculated according to Eq.~\eqref{eq:subprob}.}
\label{fig:uf}
\end{figure}

Let $x_1 = 1$, $x_2 = -1$ and the acceptance level $x_0 = 0$. 
Fig.\ \ref{fig:uf} (left) shows five different utility functions, one linear function ``lin'', one convex function ``RS'', one concave function ``RA'', and two mixed functions ``mix1'' and ``mix2'' (for details see caption). The corresponding subjective probabilities are shown in Fig.\ \ref{fig:uf} (right). Since the function ``RA'' is concave, the corresponding valuation function is risk-averse 
and therefore the probability of high-reward event is always underestimated. For the case of the convex function ``RS'', the probability of high-reward event is always overestimated. However, since the ``mix1'' function is convex on $[0,\infty)$ but concave on $(-\infty,0]$, high probabilities are underestimated while low probabilities are overestimated, which replicates very well the probability weighting function applied in prospect theory for gains (cf.~Fig.\ 1, \citealp{tversky1992advances}). Conversely, the ``mix2'' function, which is concave on $[0,\infty)$ and convex on $(-\infty,0]$, corresponds to the overestimation of high probabilities and the underestimation of low probabilities. This corresponds to the weighting function used for losses in prospect theory (cf.~Fig.\ 2, \citealp{tversky1992advances}). 

We will see in the following section that the advantage of using the utility-based shortfall is that we can derive iterating learning algorithms for the estimation of the subjective valuations, whereas it is difficult to derive such algorithms in the framework of prospect theory.

\section{Risk-sensitive Reinforcement Learning}
\label{sec:rsmdps}
A Markov decision process (see e.g.\ \citealt{puterman1994markov}) $$\mathcal M = \{ \mathbf S, (\mathbf A, \mathbf A(s), s\in \mathbf S), \mathcal P, (r, \mathcal P_r) \},$$ consists of a state space $\mathbf S$, admissible action spaces $\mathbf A(s) \subset \mathbf A$ at $s \in \mathbf S$, a transition kernel $\mathcal P(s'|s,a)$, which denotes the transition probability moving from one state $s$ to another state $s'$ by executing action $a$, and a reward function $r$ with its distribution $\mathcal P_r$. In order to model random rewards, we assume that the reward function has the form\footnote{In standard MDPs, it is sufficient \citep{puterman1994markov} to consider the \emph{deterministic} reward function $\bar r(s,a) := \sum_{\epsilon \in \mathbf E} r(s,a,\varepsilon) \mathcal P_r(\varepsilon|s,a)$, i.e., the mean reward at each $(s,a)$-pair. In risk-sensitive cases, random rewards cause also risk and uncertainties. Hence, we keep the generality by using random rewards.} 
$$r(s,a,\varepsilon):\mathbf S \times \mathbf A \times \mathbf E \rightarrow \mathbb R.$$ 
$\mathbf E$ denotes the noise space with distribution $\mathcal P_r(\varepsilon|s,a)$, i.e., given $(s,a)$, $r(s,a,\varepsilon)$ is a random variable with values drawn from $\mathcal P_r(\cdot|s,a)$. Let $R(s,a)$ be the \emph{random} reward gained at $(s,a)$, which follows the distribution $\mathcal P_r(\cdot|s,a)$.  The random state (resp.~action) at time $t$ is denoted by $S_t$ (resp.~$A_t$). Finally, we assume that all sets $\mathbf S, \mathbf A, \mathbf E$ are finite.

A \emph{Markov policy} $\boldsymbol \pi = [\pi_0, \pi_1, \ldots]$ consists of a sequence of single-step Markov policies at times $t=0,1,\ldots$, where $\pi_t(A_t = a|S_t = s)$ denotes the probability of choosing action $a$ at state $s$. Let $\Pi$ be the set of all Markov policies. The optimal policy within a time horizon $T$ is obtained by maximizing the expectation of the discounted cumulative rewards,
\begin{align}
 J_T(\boldsymbol \pi, s) := \max_{\boldsymbol \pi \in \Pi} \mathbb E \left[ \sum_{t=0}^T \gamma^t R(S_t,A_t) | S_0 = s, \boldsymbol \pi\right]. \label{eq:mdp:obj}
\end{align}
where $s \in \mathbf S$ denotes the initial state and $\gamma \in [0,1)$ the discount factor. Expanding the sum leads to 
\begin{align}
 J_T(\boldsymbol \pi, s) = \mathbb E^{\pi_0}_{S_0=s} \left[R(S_0,A_0) +  \gamma 
   \mathbb E^{\pi_1}_{S_1} \left[ R(S_1,A_1) +  \ldots + \gamma \mathbb
     E^{\pi_{T}}_{S_{T}} \left[ R(S_T,A_T) \right] \ldots  \right]
 \right]. \label{eq:T}
\end{align}
We now generalize the conditional expectation $\mathbb E_{s}^\pi$ to represent the valuation functions considered in Section \ref{sec:valfunc}. Let $\mathbf K := \{ (s,a) | s \in \mathbf S, a \in \mathbf A(s)\}$ be the set of all admissible state-action pairs. Let
\begin{align}
 I = \mathbf S \times \mathbf E \quad \textrm{and} \quad \mu_{s,a}(s',\varepsilon) = \mathcal P(s'|s,a) \mathcal P_r(\varepsilon|s,a). \label{eq:mu}
\end{align}
A mapping $\mathcal U(X, \mu|s,a): \mathbb R^{\lvert I \rvert} \times \mathscr P \times \mathbf K \rightarrow \mathbb R$ is called a \textbf{valuation map}, if for each $(s,a) \in \mathbf K$, $\mathcal U(\cdot|s,a)$ is a valuation function on $\mathbb R^{\lvert I \rvert} \times \mathscr P$. 
Let $\mathcal U_{s,a}(X, \mu)$ be a short notation of $\mathcal U(X, \mu|s,a)$ and let $$\mathcal U^\pi_s(X, \mu) := \sum_{a \in \mathbf A(s)} \pi(a | s) \mathcal U(X, \mu|s,a)$$
be the valuation map averaged over all actions.
Since $\mu \equiv \mu_{s,a}$ for each $(s,a) \in \mathbf K$, we will omit $\mu$ in $\mathcal U$ in the following.
Replacing the conditional expectation $\mathbb E_s^\pi$ with $\mathcal U^\pi_s$ in Eq.~\eqref{eq:T}, the risk-sensitive objective becomes
\begin{align}
 \tilde J_T(\boldsymbol \pi, s) := \mathcal U^{\pi_0}_{S_0=s} [R(S_0,A_0) +  \gamma 
   \mathcal U^{\pi_1}_{S_1} [ R(S_1,A_1) +  \ldots + \gamma \mathcal
     U^{\pi_{T}}_{S_{T}} \left[ R(S_T,A_T) \right] \ldots  ] ].\label{eq:T:rs}
\end{align}
The optimal policy is then given by $\max_{\boldsymbol \pi \in \Pi} \tilde J_T(\boldsymbol \pi, s)$. For infinite-horizon problem, we obtain
\begin{align}
 \max_{\boldsymbol \pi \in \Pi} \tilde J(\boldsymbol \pi, s) := \lim_{T \rightarrow \infty}\tilde J_T(\boldsymbol \pi, s), \label{eq:obj:rs}
\end{align}
using the same line of argument.

The optimization problem for finite-stage objective function $\tilde J_T$ can be solved by a generalized \emph{dynamic programming} \citep{bertsekas1996neuro}, while the one defined in Eq.~\eqref{eq:obj:rs} requires the solution to the \emph{risk-sensitive Bellman equation}:
\begin{align}
 V^*(s) = \max_{a \in \mathbf A(s)} \mathcal U_{s,a}(R(s,a) + \gamma V^*). \label{eq:bellman}
\end{align}
The latter is a consequence of the following theorem.
\begin{theorem}[Theorem 5.5, \citealp{Shen2013}]
\label{th:vi} $V^*(s)= \max_{\boldsymbol \pi} \tilde J(\boldsymbol \pi, s)$ holds for all $s \in \mathbf S$, whenever $V^*$ satisfies the equation \eqref{eq:bellman}. Furthermore, a deterministic policy $\pi^*$ is optimal, if $\pi^*(s) = \arg\max_{a \in \mathbf A(s)} \mathcal U_{s,a}(R + \gamma V^*)$. 
\end{theorem}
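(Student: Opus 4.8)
The plan is to recast the nested valuation in Eq.~\eqref{eq:T:rs} into operator form and then to exploit the two valuation axioms to make the Bellman map a contraction. For a single-step policy $\pi$ define the operator $(\mathcal T_\pi V)(s) := \sum_{a}\pi(a|s)\,\mathcal U_{s,a}(R(s,a) + \gamma V)$ on value functions $V \in \mathbb R^{\lvert \mathbf S \rvert}$, and let $(\mathcal T V)(s) := \max_{a \in \mathbf A(s)} \mathcal U_{s,a}(R(s,a) + \gamma V)$ be the greedy (Bellman) operator, so that Eq.~\eqref{eq:bellman} reads $V^* = \mathcal T V^*$. Because a convex combination over actions is maximized by placing all mass on the best action, $\mathcal T V = \max_\pi \mathcal T_\pi V$ holds pointwise. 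Unrolling Eq.~\eqref{eq:T:rs} with a zero terminal value then gives the key identity $\tilde J_T(\boldsymbol\pi,\cdot) = \mathcal T_{\pi_0}\mathcal T_{\pi_1}\cdots\mathcal T_{\pi_T}\mathbf 0$.

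Next I would establish that $\mathcal T$ and every $\mathcal T_\pi$ are $\gamma$-contractions in the sup-norm, which is where both axioms enter. Monotonicity (Axiom I) gives $V \le W \Rightarrow \mathcal T V \le \mathcal T W$, since $R(s,a)+\gamma V \le R(s,a)+\gamma W$ componentwise over $I$ and the maximum over $a$ preserves the order. Translation invariance (Axiom II) gives $\mathcal T(V + c\mathbf 1) = \mathcal T V + \gamma c\,\mathbf 1$ for any constant $c$, because adding $\gamma c$ to every coordinate of $R(s,a)+\gamma V$ shifts each valuation by exactly $\gamma c$. Combining the two with $c = \lVert V-W\rVert_\infty$ yields $\lVert \mathcal T V - \mathcal T W\rVert_\infty \le \gamma\lVert V-W\rVert_\infty$, and the identical argument applies to each $\mathcal T_\pi$. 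Since the finite sets $\mathbf S,\mathbf A,\mathbf E$ force the rewards, and hence all iterates, to stay bounded, the sequences $\mathcal T^{T+1}\mathbf 0$ and $\mathcal T_{\pi^*}^{T+1}\mathbf 0$ converge geometrically; by uniqueness of the fixed point the limit of the former is exactly the given solution $V^*$.

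With these tools the optimality claim splits into an upper bound and its attainment, which sidesteps interchanging $\lim_{T\to\infty}$ with the supremum over policies. For the upper bound, fix any $\boldsymbol\pi$; from $\mathcal T_\pi \le \mathcal T$ pointwise and the monotonicity of each factor, a backward induction on the composition gives $\tilde J_T(\boldsymbol\pi,\cdot) = \mathcal T_{\pi_0}\cdots\mathcal T_{\pi_T}\mathbf 0 \le \mathcal T^{T+1}\mathbf 0$, and letting $T\to\infty$ yields $\tilde J(\boldsymbol\pi,\cdot) \le V^*$. For attainment, let $\pi^*$ be the stationary greedy policy from the theorem; then $\mathcal T_{\pi^*} V^* = \mathcal T V^* = V^*$, so $V^*$ is the unique fixed point of the contraction $\mathcal T_{\pi^*}$, whence $\tilde J(\boldsymbol\pi^*,\cdot) = \lim_{T\to\infty} \mathcal T_{\pi^*}^{T+1}\mathbf 0 = V^*$. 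Therefore $\max_{\boldsymbol\pi}\tilde J(\boldsymbol\pi,\cdot) = V^*$, with the greedy policy optimal.

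I expect the main obstacle to be the infinite-horizon bookkeeping rather than any single inequality: one must confirm that $\tilde J(\boldsymbol\pi,\cdot) = \lim_{T\to\infty}\tilde J_T(\boldsymbol\pi,\cdot)$ exists for every (possibly non-stationary) policy and that the limit can be passed through both sides of $\tilde J_T(\boldsymbol\pi,\cdot) \le \mathcal T^{T+1}\mathbf 0$. The contraction estimate makes each composition $\gamma^{T+1}$-Lipschitz, so the tails are Cauchy and the convergence is uniform, which is what licenses taking the limit; organizing the argument as \emph{upper bound plus attainment by the stationary greedy policy} is precisely what avoids swapping $\lim_{T\to\infty}$ and $\max_{\boldsymbol\pi}$ directly.
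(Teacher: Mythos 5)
Your proof is correct. Be aware, however, that this paper never proves Theorem~\ref{th:vi} itself: it is quoted directly from its source (Theorem 5.5, \citealp{Shen2013}), and the one ingredient of that argument restated here, the sup-norm $\gamma$-contraction of $\mathcal T_s(V) = \max_{a \in \mathbf A(s)} \mathcal U_{s,a}(R(s,a) + \gamma V)$ (Lemma~\ref{lm:con}, cf.\ Lemma 5.4, \citealp{Shen2013}), is exactly the estimate you derive from monotonicity and translation invariance. Your decomposition into an upper bound over arbitrary non-stationary policies plus attainment by the stationary greedy policy, together with the Cauchy argument showing that the limit in Eq.~\eqref{eq:obj:rs} exists for every policy, is essentially the verification route of the cited source, so your proposal amounts to a correct, self-contained reconstruction of the imported proof.
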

Define $Q^*(s,a) := \mathcal U_{s,a}(R + \gamma V^*)$. Then Eq.~\eqref{eq:bellman} becomes
\begin{align}
 Q^*(s,a)= \mathcal U_{s,a}\left(R(s,a) + \gamma  \max_{a \in \mathbf A(s')} Q^*(s',a)\right), \forall (s,a) \in \mathbf K. \label{eq:optimalQ}
\end{align}

To carry out value iteration algorithms, the MDP $\mathcal M$ must be known \emph{a priori}. In many real-life situations, however, the transition probabilities are unknown as well as the outcome of an action before its execution. Therefore, an agent has to explore the environment while gradually improving its policy.
We now derive RL-type algorithms for estimating Q-values of general valuation maps based on the utility-based shortfall, which do not require knowledge of the reward and transition model. 

\begin{proposition}[cf.~Proposition 4.104, \citealp{follmer2004stochastic}] \label{prop:impl} 
 Let $\rho_{x_0}^{\textrm{u}}$ be a shortfall defined in Eq.~\eqref{eq:sh}, where $u$ is continuous and strictly increasing. Then the following statements are equivalent: (i) $\rho_{x_0}^{\textrm{u}}(X) = m^*$ and (ii) $\mathbb E^\mu[u(X-m^*)] = x_0$.
\end{proposition}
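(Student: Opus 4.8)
The plan is to reduce the entire statement to elementary properties of the single real function
$$g(m) := \mathbb{E}^\mu[u(X - m)] = \sum_{i \in I} u(X(i) - m)\,\mu(i), \qquad m \in \mathbb{R}.$$
First I would record two structural facts about $g$. Since $u$ is continuous and $I$ is finite, $g$ is a finite sum of continuous functions of $m$ and is therefore continuous. Since $u$ is strictly increasing and $\mu$ is a probability distribution on $I$ (so $\sum_{i} \mu(i) = 1$ and at least one weight is positive), for $m_1 < m_2$ each summand satisfies $u(X(i) - m_1) \ge u(X(i) - m_2)$, strictly so on every atom with $\mu(i) > 0$; hence $g(m_1) > g(m_2)$, i.e. $g$ is strictly decreasing. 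With this notation the shortfall reads $\rho_{x_0}^{\textrm{u}}(X) = \sup A$, where $A := \{ m \in \mathbb{R} \mid g(m) \ge x_0 \}$.

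For the implication (ii) $\Rightarrow$ (i), suppose $g(m^*) = x_0$. Then $m^* \in A$, while for every $m > m^*$ strict monotonicity gives $g(m) < g(m^*) = x_0$, so $m \notin A$. Thus $m^*$ is the largest element of $A$, whence $\sup A = m^*$ and $\rho_{x_0}^{\textrm{u}}(X) = m^*$, which is (i).

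For the converse (i) $\Rightarrow$ (ii), assume $\rho_{x_0}^{\textrm{u}}(X) = m^* = \sup A$ with $m^*$ finite, and pin down $g(m^*)$ from both sides using continuity. Choosing $m_n \in A$ with $m_n \to m^*$ gives $g(m_n) \ge x_0$, and continuity yields $g(m^*) \ge x_0$. Choosing any sequence $m_n \downarrow m^*$ with $m_n > m^*$ forces $m_n \notin A$, i.e. $g(m_n) < x_0$, so continuity yields $g(m^*) \le x_0$. Combining the two bounds gives $g(m^*) = x_0$, which is exactly (ii).

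The only genuine subtlety — the step I would treat most carefully — is the attainment of the supremum: that $x_0$ lies in the range of $g$ and that $\sup A$ is finite and realized as a value of $g$. This is precisely where continuity (an intermediate-value argument) does the work in the (i) $\Rightarrow$ (ii) direction, while strict monotonicity guarantees uniqueness and ensures that $A$ is a half-line $(-\infty, m^*]$ rather than a set on which $g$ could plateau. Both statements (i) and (ii) presuppose a finite real $m^*$, so the degenerate cases (empty $A$, or $\sup A = +\infty$ when $u$ is bounded and $x_0$ lies outside its range) are excluded by hypothesis and require no separate treatment.
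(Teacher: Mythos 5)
Your proof is correct and takes essentially the same route as the paper's: both directions reduce to strict monotonicity and continuity of the map $g(m) = \mathbb{E}^\mu[u(X-m)]$, with monotonicity handling (ii) $\Rightarrow$ (i) and continuity handling (i) $\Rightarrow$ (ii). The only difference is cosmetic: where the paper asserts $\mathbb{E}^\mu[u(X-m^*)] \geq x_0$ ``by definition'' and then rules out strict inequality by contradiction, you obtain the two bounds $g(m^*) \geq x_0$ and $g(m^*) \leq x_0$ directly from sequences approaching $m^*$ from within and from outside the acceptance set, which makes explicit the supremum-attainment step that the paper glosses over.
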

\noindent For proof see Appendix A.

Consider the valuation map induced by the utility-based shortfall\footnote{In principle, we can apply different utility functions $u$ and acceptance levels $x_0$ at different $(s,a)$-pairs. However, for simplicity, we
drop their dependence on $(s,a)$.} $$\mathcal U_{s,a}(X) = \sup\{m \in \mathbb R \ | \ \mathbb E^{\mu_{s,a}}\left[u(X-m)\right] \geq x_0 \},$$ where $\mu_{s,a}$ is defined in Eq.~\eqref{eq:mu}. 
If $\mathcal U_{s,a}(X) = m^*(s,a)$ exists, Proposition \ref{prop:impl} assures that $m^*(s,a)$ is the unique solution to equation $$\mathbb E^{\mu_{s,a}}\left[u(X-m^*(s,a))\right] = x_0.$$ Let $X = R + \gamma V^*$. Then $m^*(s,a)$ corresponds to the optimal Q-value $Q^*(s,a)$ defined in Eq.~\eqref{eq:optimalQ}, which is equivalent to  
\begin{align}
\sum_{s' \in \mathbf S, \varepsilon \in \mathbf E} \mathcal P(s'|s,a) \mathcal P_r(\varepsilon|s,a)
u \left( r(s,a,\varepsilon) + \gamma \max_{a' \in \mathbf A(s')} Q^*(s',a') - Q^*(s,a) \right) \nonumber \\
= x_0, \forall (s,a) \in \mathbf K.  \label{eq:opt}
\end{align}
Let $\left\{s_t,a_t, s_{t+1}, r_t\right\}$ be the sequence of states, chosen actions, successive states and received rewards. Analogous to the standard Q-learning algorithm, we consider the following iterative procedure
\begin{align}
 Q_{t+1}(s_t,a_t) = Q_t(s_t,a_t) + \alpha_t(s_t,a_t) \left[ u\left(r_t + \gamma \max_{a} Q_t(s_{t+1},a) - Q_t(s_t,a_t)\right) - x_0 \right], \label{eq:ql}
\end{align}
where $\alpha_t \geq 0$ denotes learning rate function that satisfies $\alpha_t(s,a) > 0$ only if $(s,a)$ is updated at time $t$, i.e., $(s,a) = (s_t,a_t)$. In other words, for all $(s,a)$ that are not visited at time $t$, $\alpha_t(s,a) = 0$ and their Q-values are not updated.  
Consider utility functions $u$ with the following properties.
\begin{assumption}\label{ass:2}
 (i) The utility function $u$ is strictly increasing and there exists some $y_0\in \mathbb R$ such that $u(y_0) = x_0$. 
 (ii) There exist positive constants $\epsilon, L$ such that $ 0 < \epsilon \leq \frac{u(x) - u(y)}{x-y} \leq L $, for all $x \neq y \in \mathbb R$.
\end{assumption}
\noindent Then the following theorem holds (for proof see Appendix \ref{sec:proof_rl}).

\begin{theorem}\label{th:ql}
 Suppose Assumption \ref{ass:2} holds. Consider the generalized Q-learning algorithm stated in Eq.~\eqref{eq:ql}. If the nonnegative learning rates $\alpha_t(s,a)$ satisfy
 \begin{equation}
  \sum_{t=0}^\infty \alpha_t(s,a) = \infty \quad \textrm{ and } \quad \sum_{t=0}^\infty \alpha_t^2(s,a) < \infty, \quad \forall (s,a) \in \mathbf K, \label{eq:io}
 \end{equation}
then $Q_t(s,a)$ converges to $Q^*(s,a)$ for all $(s,a) \in \mathbf K$ with probability 1. 
\end{theorem}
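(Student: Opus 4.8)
The plan is to treat the iteration \eqref{eq:ql} as an asynchronous stochastic approximation scheme and to invoke a standard convergence theorem for such schemes (for instance the asynchronous stochastic approximation result in \citealp{bertsekas1996neuro}). First I would record that $Q^*$ is well defined: the Bellman operator $(HQ)(s,a):=\mathcal U_{s,a}\big(R(s,a)+\gamma\max_{a'}Q(\cdot,a')\big)$ is a $\gamma$-contraction in $\|\cdot\|_\infty$ — an immediate consequence of the monotonicity and translation-invariance axioms of the valuation map together with $\gamma<1$ — so it has a unique fixed point $Q^*$, which by Proposition \ref{prop:impl} is equivalently characterized by \eqref{eq:opt}. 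Next I would rewrite the update in convex-combination form: setting $\mathcal G_t(s,a):=Q_t(s,a)+u\big(r_t+\gamma\max_a Q_t(s_{t+1},a)-Q_t(s,a)\big)-x_0$, \eqref{eq:ql} becomes $Q_{t+1}(s,a)=(1-\alpha_t(s,a))Q_t(s,a)+\alpha_t(s,a)\mathcal G_t(s,a)$, and with $\Delta_t:=Q_t-Q^*$ and $F_t:=\mathcal G_t-Q^*$ we obtain the canonical form $\Delta_{t+1}(s,a)=(1-\alpha_t(s,a))\Delta_t(s,a)+\alpha_t(s,a)F_t(s,a)$. The convergence theorem then asks for two things: a sup-norm contraction bound on $\mathbb E[F_t\mid\mathcal F_t]$ toward $0$, and a bound on the conditional variance of $F_t$.

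The main work, and the step I expect to be the genuine obstacle, is the contraction bound. Conditioning on the history $\mathcal F_t$ and using \eqref{eq:mu}, the mean field is the operator $(\bar HQ)(s,a)=Q(s,a)+\sum_{s',\varepsilon}\mu_{s,a}(s',\varepsilon)\,u\big(r(s,a,\varepsilon)+\gamma\max_{a'}Q(s',a')-Q(s,a)\big)-x_0$, and \eqref{eq:opt} says precisely that $Q^*$ is its fixed point, $\bar HQ^*=Q^*$. Subtracting the two defining identities and using Assumption \ref{ass:2}(ii) to write each increment $u(p)-u(q)=\theta\,(p-q)$ with $\theta\in[\epsilon,L]$, I would obtain for each $(s,a)$ the exact expression $(\bar HQ)(s,a)-Q^*(s,a)=(1-\bar\theta)\,\Delta(s,a)+\gamma\sum_{s',\varepsilon}\mu_{s,a}\theta_{s',\varepsilon}\big(\max_{a'}Q(s',a')-\max_{a'}Q^*(s',a')\big)$, where $\bar\theta=\sum_{s',\varepsilon}\mu_{s,a}\theta_{s',\varepsilon}\in[\epsilon,L]$. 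Bounding $|\max_{a'}Q(s',a')-\max_{a'}Q^*(s',a')|\le\|\Delta\|_\infty$ and, provided $\bar\theta\le1$, using $|1-\bar\theta|=1-\bar\theta$, this gives $\|\bar HQ-Q^*\|_\infty\le\big(1-\bar\theta(1-\gamma)\big)\|\Delta\|_\infty\le\big(1-\epsilon(1-\gamma)\big)\|\Delta\|_\infty$, a genuine contraction since $\epsilon>0$ and $\gamma<1$. The delicate point is that $\bar\theta\le1$ can fail when $L>1$, in which case the naive sup-norm estimate is not contractive; I would resolve this by observing that replacing $(u,x_0)$ with $(u/L,\,x_0/L)$ leaves both $\mathcal U_{s,a}$ and $Q^*$ unchanged and merely rescales the effective learning rate to $\alpha_t/L$, which still satisfies \eqref{eq:io}. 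Hence we may assume $L\le1$ without loss of generality, forcing $\bar\theta\le1$ and securing the contraction with modulus $\beta=1-\epsilon(1-\gamma)<1$.

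It then remains to verify the variance condition and conclude. Because $\mathbf S,\mathbf A,\mathbf E$ are finite the rewards are bounded, and since $u$ is Lipschitz (slope $\le L$) the martingale increment $F_t-\mathbb E[F_t\mid\mathcal F_t]$ is a bounded-variation transform of the finitely supported randomness in $(s_{t+1},r_t)$; one checks directly that $\mathrm{Var}[F_t\mid\mathcal F_t]\le C(1+\|\Delta_t\|_\infty^2)$ for a constant $C$, the $Q^*$ term being fixed. With the contraction modulus $\beta<1$, this variance bound, and the Robbins--Monro conditions \eqref{eq:io} on $\alpha_t$ in hand, the asynchronous stochastic approximation theorem yields $\Delta_t\to0$, i.e.\ $Q_t(s,a)\to Q^*(s,a)$ for every $(s,a)\in\mathbf K$ with probability $1$. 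In summary, the argument follows the standard Tsitsiklis and Jaakkola--Jordan--Singh template for $Q$-learning; the only genuinely new ingredient is extracting the contraction from the shortfall fixed-point equation, where the bounded-slope Assumption \ref{ass:2}(ii) together with the rescaling trick plays the role that the $\gamma$-contraction of the ordinary Bellman operator plays in the classical proof.
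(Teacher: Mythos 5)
Your proposal is correct and follows essentially the same route as the paper's proof: both recast Eq.~\eqref{eq:ql} as an asynchronous stochastic approximation in convex-combination form, extract the sup-norm contraction from Assumption \ref{ass:2}(ii) via the slope representation $u(p)-u(q)=\theta\,(p-q)$ with $\theta\in[\epsilon,L]$, bound the conditional noise variance using bounded rewards and the Lipschitz property, and invoke the Bertsekas--Tsitsiklis convergence theorem. Your ``assume $L\le 1$ without loss of generality by replacing $(u,x_0)$ with $(u/L,x_0/L)$'' is exactly the paper's device of inserting a constant $\alpha\in(0,\min(L^{-1},1)]$ into the operator $H$ and running with effective rates $\alpha_t/\alpha$; the only (immaterial) slips are that the compensating rates should be $L\alpha_t$ rather than $\alpha_t/L$, and that after rescaling the contraction modulus is $1-(\epsilon/L)(1-\gamma)$ rather than $1-\epsilon(1-\gamma)$ --- neither affects the argument, since Eq.~\eqref{eq:io} is invariant under positive constant factors and the modulus remains strictly below $1$.
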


The assumption in Eq.\ \eqref{eq:io} requires in fact that all possible state-action pairs must be visited infinitely often. Otherwise, the first sum in Eq.\ \eqref{eq:io} would be bounded by the setting of the learning rate function $\alpha_t(s,a)$. It means that, similar to the standard Q-learning, the agent has to explore the whole state-action space for gathering sufficient information about the environment. Hence, it can not take a too greedy policy in the learning procedure before the state-action space is well explored. We call a policy \textbf{proper} if under such policy every state is visited infinitely often. A typical policy, which is widely applied in RL literature as well as in models of human reward-based learning, is given by
 \begin{align}
   a_t \sim p(a_t | s_t) :=  \frac{e^{\beta Q(s_t,a_t)}}{\sum_{a} e^{\beta Q(s_t,a)}}, \label{eq:softmax}
 \end{align}
where $\beta \in [0,\infty)$ controls how greedy the policy should be. In Appendix \ref{sec:softmax}, we prove that under some technical assumptions upon the transition kernel of the underlying MDP, this policy is always proper.  
A widely used setting satisfying both conditions in Eq.~\eqref{eq:io} is to let $\alpha_t(s,a) := \frac{1}{N_t(s,a)}$, where $N_t(s,a)$ counts the number of times of visiting the state-action pair $(s,a)$ up to time $t$ and is updated trial-by-trial. This leads to the learning procedure shown in Algorithm \ref{alg:ql} (see also Fig.\ \ref{fig:rl}).
\begin{algorithm}[ht]                      
\caption{Risk-sensitive Q-learning}          
\label{alg:ql}                         
\begin{algorithmic}
\State initialize $Q(s,a) = 0$ and $N(s,a) = 0$ for all $s,a$.
\For{$t=1$ to $T$}
 \State at state $s_t$ choose action $a_t$ randomly using a proper policy (e.g.\ Eq.~\eqref{eq:softmax});
\State observe date $(s_t,a_t,r_t, s_{t+1})$;
 \State $N(s_t,a_t) \Leftarrow N(s_t,a_t) + 1$ and set learning rate: $\alpha_t := 1/N(s_t,a_t)$;
 \State update $Q$ as in Eq.~\eqref{eq:ql};
\EndFor
\end{algorithmic}
\end{algorithm}

The expression 
\begin{align*}
 TD_t := r_t + \gamma \max_{a} Q_t(s_{t+1},a) - Q_t(s,a)
\end{align*}
 inside the utility function of Eq.~\eqref{eq:ql} corresponds to the standard temporal difference (TD) error. Comparing Eq.~\eqref{eq:ql} with the standard Q-learning algorithm, we find that the nonlinear utility function is applied to the TD error (cf.~Fig.\ \ref{fig:rl}).  
This induces nonlinear transformation not only of the true rewards but also of the true transition probabilities, as has been shown in Section \ref{sec:ubsf}. By applying S-shape utility function, which is partially convex and partially concave,  we can therefore replicate key effects of prospect theory without the explicit introduction of a probability-weighting function.

\begin{figure}[ht]
 \centering
   \includegraphics[width=0.5\textwidth]{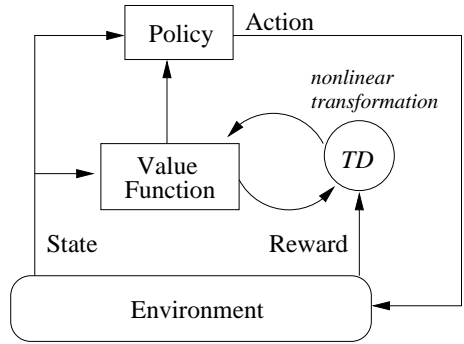}
\caption{Illustration of risk-sensitive Q-learning (cf.~Algorithm \ref{alg:ql}). The value function $Q(s,a)$ quantifies the current subjective evaluation of each state-action pair $(s,a)$. The next action is then randomly chosen according to a proper policy (e.g.~Eq.~\eqref{eq:softmax}) which is based on the current values of $Q$. After interacting with the environment, the agent obtains the reward $r$ and moves to the successor $s'$. The value function $Q(s,a)$ is then updated by the rule given in Eq.~\eqref{eq:ql}. This procedure continues until some stopping criterion is satisfied.}
\label{fig:rl}
\end{figure}

Assumption \ref{ass:2} (ii) seems to exclude several important types of utility functions. The exponential function $u(x) = e^x$ and the polynomial function $u(x) = x^p$, $p>0$, for example, do not satisfy the global Lipschitz condition required in Assumption \ref{ass:2} (ii). This problem can be solved by a truncation when $x$ is very large and by an approximation when $x$ is very close to 0. For more details see Appendices \ref{sec:truncate} and \ref{sec:heuristics}.

\section{Modeling Human Risk-sensitive Decision Making}
\label{sec:ex}

\subsection{Experiment}
Subjects were told that they are influential stock brokers, whose task is to invest into a fictive stock market (cf.\ \citealp{tobia2013}). At every trial (cf.~Fig.\ \ref{fig:mdp}a) subjects had to decide how much ($a=$ 0, 1, 2, or 3 EUR) to invest into a particular stock. After the investment, subjects first saw the change of the stock price and then were informed how much money they earned or lost. The received reward was proportional to the investment. The different trials, however, were not independent from each other (cf.~Fig.~\ref{fig:mdp}b). The sequential investment game consisted of 7 states, each one coming with a different set of contingencies, and subjects were transferred from one state to the next dependent of the amount of money they invested. For high investments, transitions followed the path labeled ``risk seeking'' (RS in Fig.~\ref{fig:mdp}b). For low investments, transitions followed the path labeled ``risk averse'' (RA in Fig.~\ref{fig:mdp}b). After 3 decisions subjects were always transferred back to the initial state, and the reward, which was accumulated during this round, was shown. State information was available to the subjects throughout every trial (cf.~Fig.~\ref{fig:mdp}a). Altogether, 30 subjects (young healthy adults) experienced 80 rounds of the 3-decision sequence.

Formally, the sequential investment game can be considered as an MDP with 7 states and 4 actions (see Fig.~\ref{fig:mdp}b). Depending on the strategy of the subjects, there are 4 possible paths, each of which is composed of 3 states. The total expected return for each path, averaged over all policies consistent with it, are shown in the right panels of Fig.~\ref{fig:mdp}b (``EV''). Path 1 provides the largest expected return per round (EV = 90), while Path 4 leads to an average loss of -9.75. Hence, to follow the on-average highest rewarded path 1, subjects have to take ``risky'' actions (investing 2 or 3 EUR at each state). Always taking conservative actions (investing 0 or 1 EUR) results in Path 4 and a high on-average loss. On the other hand, since the standard deviation of the return $R$ of each state equals $\textrm{std}(R) = a \times \textrm{C}$, where $a$ denotes the action (investment) the subject takes and $C$ denotes the price change, the higher the investment, the higher the risk. Path 1 has, therefore, the highest standard deviation (std = 14.9) of the total average reward, whereas the standard deviation of Path 4 is smallest (std = 6.9). Path 3 provides a trade-off option: it has slightly lower expected value (EV = 52.25) than Path 1 but comes with a lower risk (std = 12.3). Hence, the paradigm is suitable for observing and quantifying the risk-sensitive behavior of subjects.

\begin{figure}
 \centering
 \subfloat[Phase transition.]{\includegraphics[width=0.4\textwidth]{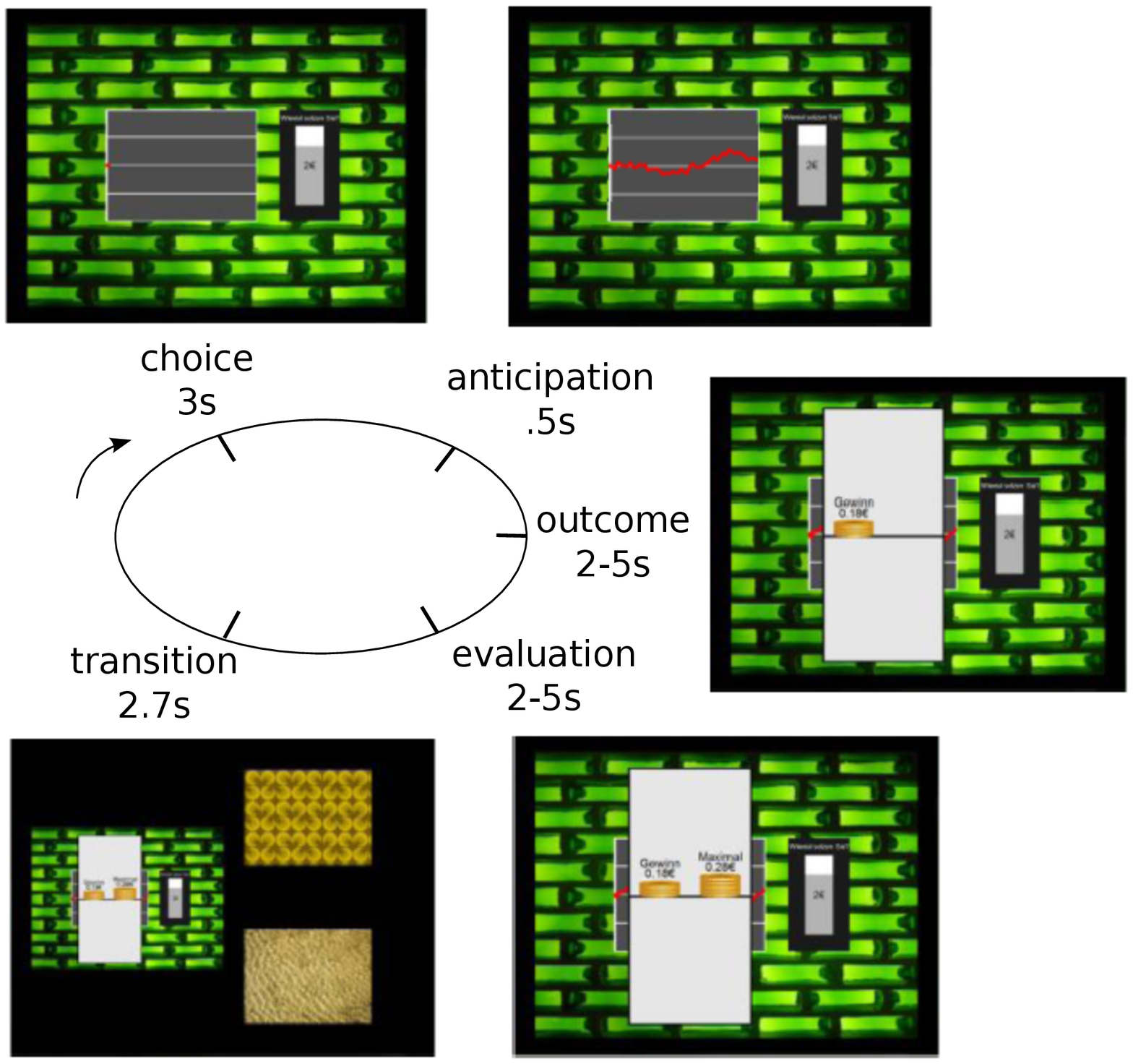}} \ 
 \subfloat[Structure of the underlying Markov decision process.]{\includegraphics[width=0.55\textwidth]{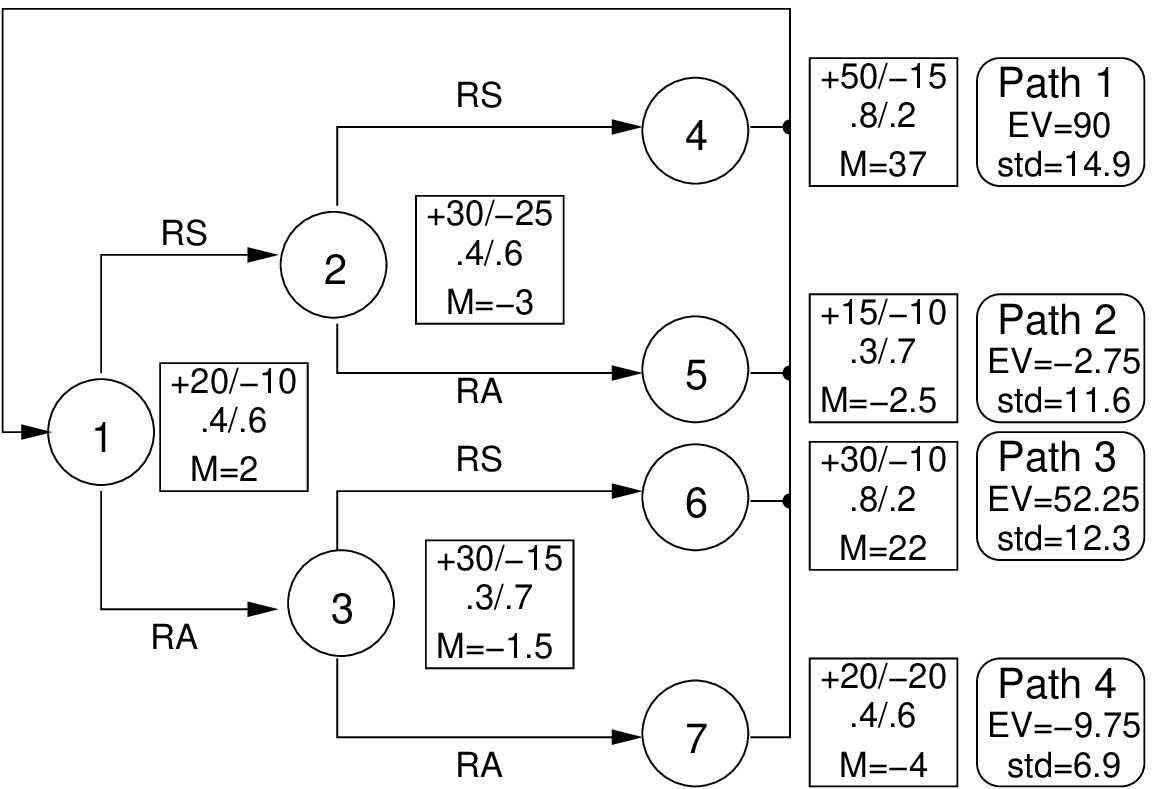}}
 \caption{The sequential investment paradigm. The paradigm is an implementation of a Markov decision process with 7 states and 4 possible actions (decisions to take) at every state. (a) Every decision (trial) consists of a choice phase (3s), during which an action (invest 0, 1, 2, or 3 EUR) must be taken by adjusting the scale bar on the screen, an anticipation phase (.5s), an outcome phase (2-5s), where the development of the stock price and the reward (wins and loses) are revealed, an evaluation phase (2-5s), where it reveals the maximal possible reward that could have been obtained for the (in hindsight) best possible action, and a transition phase (2.7s), where subjects are informed about the possible successor states and the specific transition, which will occur. The intervals of the outcome and evaluation phase are jittered for improved fMRI analysis. State information is provided by the colored patterns, the black field provides stock price information during anticipation phase, and the white field provides the reward and the maximal possible reward of this trial. After each round (3 trials), the total reward of this round is shown to subjects. (b) Structure of the underlying Markov decision process. The 7 states are indicated by numbered circles; arrows denote the possible transitions. Lables ``RS'' and ``RA'' indicate the transitions caused by the two ``risk-seeking'' (investment of 2 or 3 EUR) and the two ``risk-averse'' (investment of 0 or 1 EUR) actions. Bi-Gaussian distributions with a standard deviation of 5 are used to generate the random price changes of the stocks. Panels next to the states provide information about the means (top row) and the probabilities (center row) of ever component. M (bottom row) denotes the mean price change. The reward received equals the price change multiplied by the amount of money the subject invests. The rightmost panels provide the total expected rewards (EV) and the standard deviations (std) for all possible state sequences (Path 1 to Path 4) under the assumption that every sequence of actions consistent with a particular sequence of states is chosen with equal probability.}
 \label{fig:mdp}
\end{figure}

\begin{figure}[ht]
 \centering
   \includegraphics[width=0.8\textwidth]{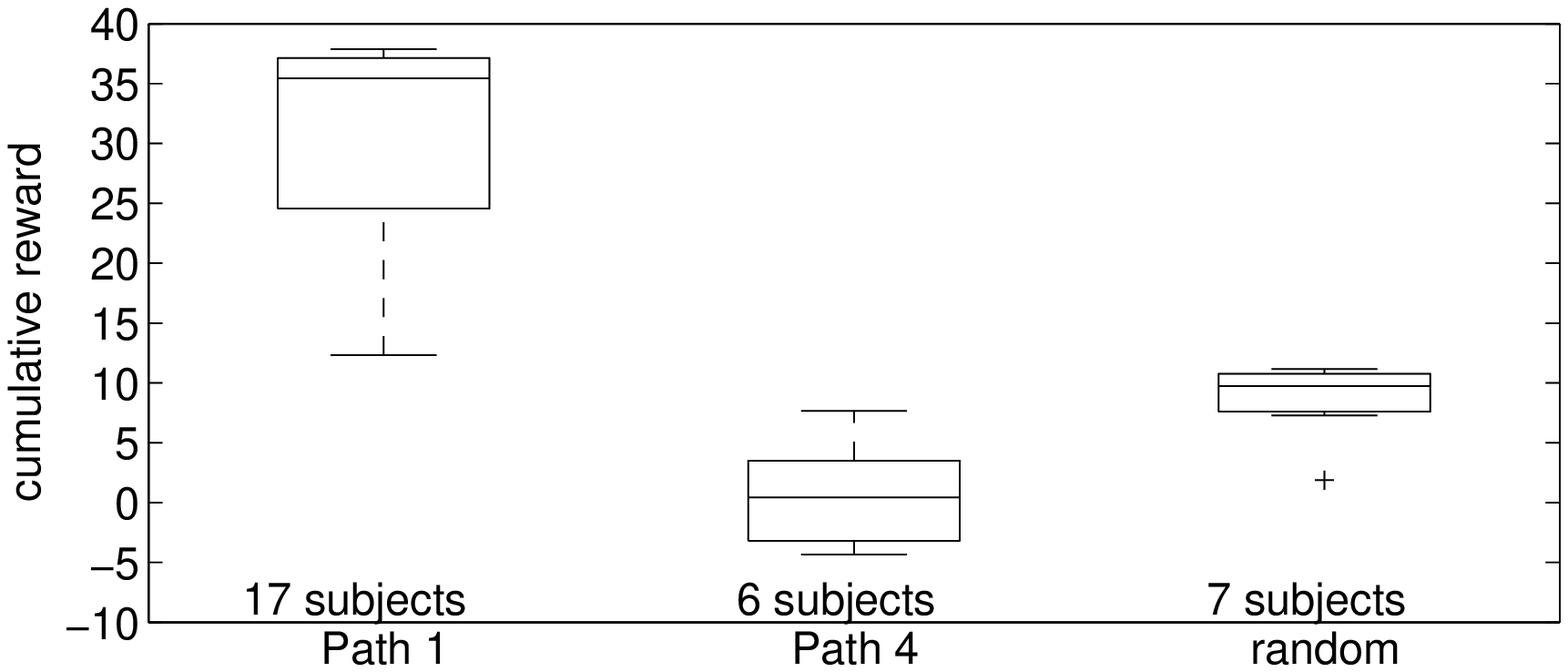}
\caption{Distribution of ``strategies'' chosen by the subjects in the sequential investment game and the corresponding cumulative rewards. Subjects are grouped according to the sequence of states (Path 1 to Path 4, cf.\ Fig.\ \ref{fig:mdp}b) they chose during the last 60 trials of the game. If a path $i$ is chosen in more than 60\% of the trials, the subject is assigned the group ``Path $i$''. Otherwise, subjects are assigned the group labeled ``random''. The vertical axis denotes the cumulative reward obtained during the last 60 trials.}
\label{fig:grouping}
\end{figure}

\subsection{Risk-sensitive Model of Human Behavior}
Fig.~\ref{fig:grouping} summarizes the strategies which were chosen by the 30 subjects. 17 subjects mainly chose Path 1, which provided them high rewards. 6 subjects chose Path 4, which gave very low rewards. The remaining 7 subjects show no significant preference among all 4 paths and the rewards they received are on average between the rewards received by the other 2 groups. The optimal policy for maximizing expected reward is the policy that follows Path 1. The results shown in Fig.~\ref{fig:grouping}, however, indicate that the standard model fails to explain the behavior of more than 40\% of the subjects.

We now quantify subjects' behavior by applying three classes of Q-learning algorithm: (1) standard Q-learning, (2) the risk-sensitive Q-learning (RSQL) method described by Algorithm \ref{alg:ql}, and (3) an expected utility (EU) algorithm with the following update rule
\begin{align}
 Q(s_t,a_t) \Leftarrow Q(s_t,a_t) + \alpha \left( u(r_t) - x_0 + \gamma \max_{a} Q(s_{t+1},a) - Q(s_t,a_t)\right), \label{eq:eu}
\end{align}
where the nonlinear transformation is applied to the reward $r_t$ directly. The latter one is a straightforward extension of expected utility theory. Risk-sensitivity is implemented via the nonlinear transformation of the true reward $r_t$. For both risk-sensitive Q-learning methods (RSQL and EU), we set the we set the reference level $x_0 = 0$ and consider the family of polynomial mixed utility functions
\begin{align}
 u(x) = \left\{
 \begin{array}{ll}
  k_+ x^{l_+} & x  \geq 0 \\
  - k_- (-x)^{l_-} & x < 0
 \end{array}
 \right..\label{eq:poly}
\end{align}
The parameters $k_{\pm} >0 $ and $l_\pm > 0$ quantify the risk-preferences separately for wins and losses 
(see Table \ref{tab:lpm}).
\begin{table}
\begin{center}\small
  \begin{tabular}{ | c | c | c | c }
    \hline
   branch $x\geq 0$  & shape & risk preference \\ \hline
   $0 <l_+ < 1$   & concave & risk-averse \\ \hline
    $l_+ = 1$ & linear & risk-neutral \\ \hline
    $l_+ > 1$ &  convex & risk-seeking \\ \hline
  \end{tabular}
  \ 
  \begin{tabular}{ | c | c | c | c }
    \hline
    branch $x < 0$ & shape & risk preference \\ \hline
     $0 < l_- < 1$ & convex & risk-seeking \\ \hline
    $l_- = 1$ & linear & risk-neutral \\ \hline
     $l_- > 1$  & concave & risk-averse \\ \hline
  \end{tabular}
\end{center}
\caption{Parameters for the two branches $x\geq 0$ (left) and $x<0$ (right) of the polynomial utility function $u(x)$ (Eq.~\eqref{eq:poly}), its shape and the induced risk preference.}
\label{tab:lpm}
\end{table}
Hence, there are 4 parameters for $u$ which have to be determined from the data. For all three classes, actions are generated according to the ``softmax'' policy Eq.\ \eqref{eq:softmax}, which is a proper policy for the paradigm (for proof see Appendix \ref{sec:softmax}), and the learning rate $\alpha$ is set constant across trials.

For RSQL, the learning rate is absorbed by the coefficients $k_\pm$. Hence, there are 6 parameters $\{ \beta, \gamma, k_{\pm}, l_{\pm}\} =: \theta$ which have to be determined. Standard Q-learning corresponds to the choice $l_{\pm} = 1$ and $k_\pm = \alpha$. The risk-sensitive model applied by \cite{niv2012neural} is also a special case of the RSQL-framework and corresponds $l_{\pm} = 1$. For the EU algorithm, there are 7 parameters, $\{ \alpha, \beta, \gamma, k_{\pm}, l_{\pm}\} =: \theta$, which have to be fitted to the data. $l_{\pm} = 1$ and $k_\pm = 1$ again corresponds to the standard Q-learning method.

Parameters were determined subject-wise by maximizing the log-likelihood of the subjects' action sequences,
\begin{align}
\max_{\theta} L(\theta) := \sum_{t=1}^{T} \log p(a_t|s_t, \theta) = \sum_{t=1}^{T} \log \frac{e^{\beta Q(s_t,a_t|\theta)}}{\sum_{a} e^{\beta Q(s_t,a|\theta)}} \label{eq:ml}
\end{align}
where $Q(s,a|\theta)$ indicates the dependence of the Q-values on the model parameters $\theta$. Since RSQL/EU and the standard Q-learning are nested model classes, we apply the Bayesian information criterion (BIC, see e.g.~\citealp{ghosh2006introduction})
\begin{align*}
 B := - 2 L + k \log(n)
\end{align*}
for model selection. $L$ denotes the log-likelihood, Eq.~\eqref{eq:ml}. $k$ and $n$ are the number of parameters and trials respectively.

To compare results, we report relative BIC scores, $\Delta B := B - B_Q$, where $B$ is the BIC score of the candidate model and $B_Q$ is the BIC score of the standard Q-learning model. We obtain
\begin{align*}
 \Delta B =& -500.14 \quad \textrm{ for RSQL, and } \\
 \Delta B =& -23.10  \quad \ \  \textrm{ for EU}.
\end{align*}
The more negative the relative BIC score is, the better the model fits data. Hence, the RSQL algorithm provides a significantly better explanation for the behavioral data than the EU algorithm and standard Q-learning. In the following, we only discuss the results obtained with the RSQL model.

\begin{figure}[ht]
 \centering
   \includegraphics[width=0.8\textwidth]{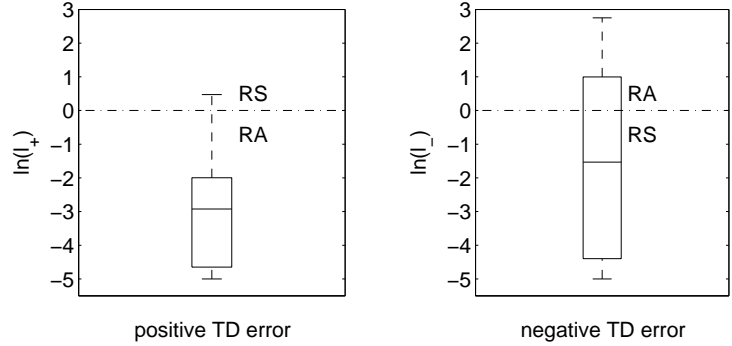}
\caption{Distribution of values for the shape parameters $l_+$ (left) and $l_-$ (right) for the RSQL model.}
\label{fig:para}
\end{figure}

Fig.\ \ref{fig:para} shows the distribution of best-fitting values for the two parameters $l_{\pm}$ which quantify the risk-preferences of the individual subjects. We conclude (cf.~Table \ref{tab:lpm}) that most of the subjects are risk-averse for positive and risk-seeking for negative TD errors. The result is consistent with previous studies from the economics literature (see \citealp{tversky1992advances}, and references therein). 

After determining the parameters $\{k_\pm, l_\pm \}$ for the utility functions, we perform an analysis similar to the analysis discussed in Section \ref{sec:toyex}. Given an observed reward sequence $\{ r_i\}_{i=1}^N$, the empirical subjective mean $m_{sub}$ is obtained by solving the following equation
\begin{align*}
 \frac{1}{N}\sum_{i=1}^N u(r_i - m_{sub}) = 0.
\end{align*}
If subjects are risk-neutral, then $u(x) = x$, and $m_{sub} = m_{emp} = \frac{1}{N} \sum_{i=1}^N r_i$ is simply the empirical mean. Following the idea of prospect theory, we define a normalized subjective probability 
$\Delta p$,
\begin{align}
 \Delta p := \frac{m_{sub} - \min_{i} r_{i}}{\max_{i} r_{i} - \min_{i} r_{i}} - \frac{m_{emp} -  \min_{i} r_{i}}{\max_{i} r_{i} - \min_{i} r_{i}} = \frac{m_{sub} - m_{emp}}{\max_{i} r_{i} - \min_{i} r_{i}}. \label{eq:relP}
\end{align}
If $\Delta p$ is positive, the probability of rewards is overestimated and the induced policy is, therefore, risk-seeking. If $\Delta p$ is negative, the probability of rewards is underestimated and the policy is risk-averse. Fig.\ \ref{fig:subprob} summarizes the distribution of normalized subjective probabilities for subjects from the ``Path 1'', ``Path 4'' and ``random'' groups of Fig.~\ref{fig:grouping}.  For subjects within group ``Path 1'', $\lvert \Delta p \rvert$ is small and their behaviors are similar to those of risk-neutral agents. This is consistent with their policy, because both risk-seeking and risk-neutral agents should prefer Path 1. For subjects within groups ``Path 4'' and ``random'', the normalized subjective probabilities are on average 10 \% lower than those of risk-neutral agents. This explains why subjects in these groups adopt the conservative policies and only infrequently choose Path 1. 

\begin{figure}[ht]
 \centering
   \includegraphics[width=0.8\textwidth]{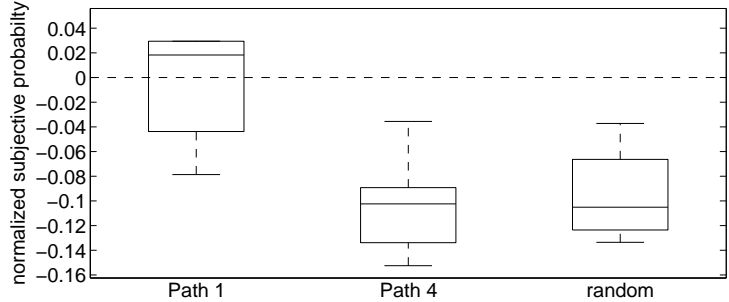}
\caption{Distribution of normalized subjective probabilities, $\Delta p$, Eq.~\eqref{eq:relP}, for the different subject groups defined in Fig.~\ref{fig:grouping}. }
\label{fig:subprob}
\end{figure}

\subsection{fMRI Results}
Functional magnetic resonance imaging (fMRI) data were simultaneously recorded while subjects played the sequential investment game. The analysis of fMRI data was conducted in SPM8 (Wellcome Department of Cognitive Neurology, London, UK; details of the magnetic resonance protocol and data processing are presented in Appendix \ref{sec:mr}). The sequence of Q-values for the action chosen at each state were used as parametric modulators during the choice phase, and temporal difference (TD) errors were used at the outcome phase (see Fig.~\ref{fig:mdp}a).

\begin{figure}
 \centering
 \subfloat[TD errors]{\includegraphics[width=0.45\textwidth]{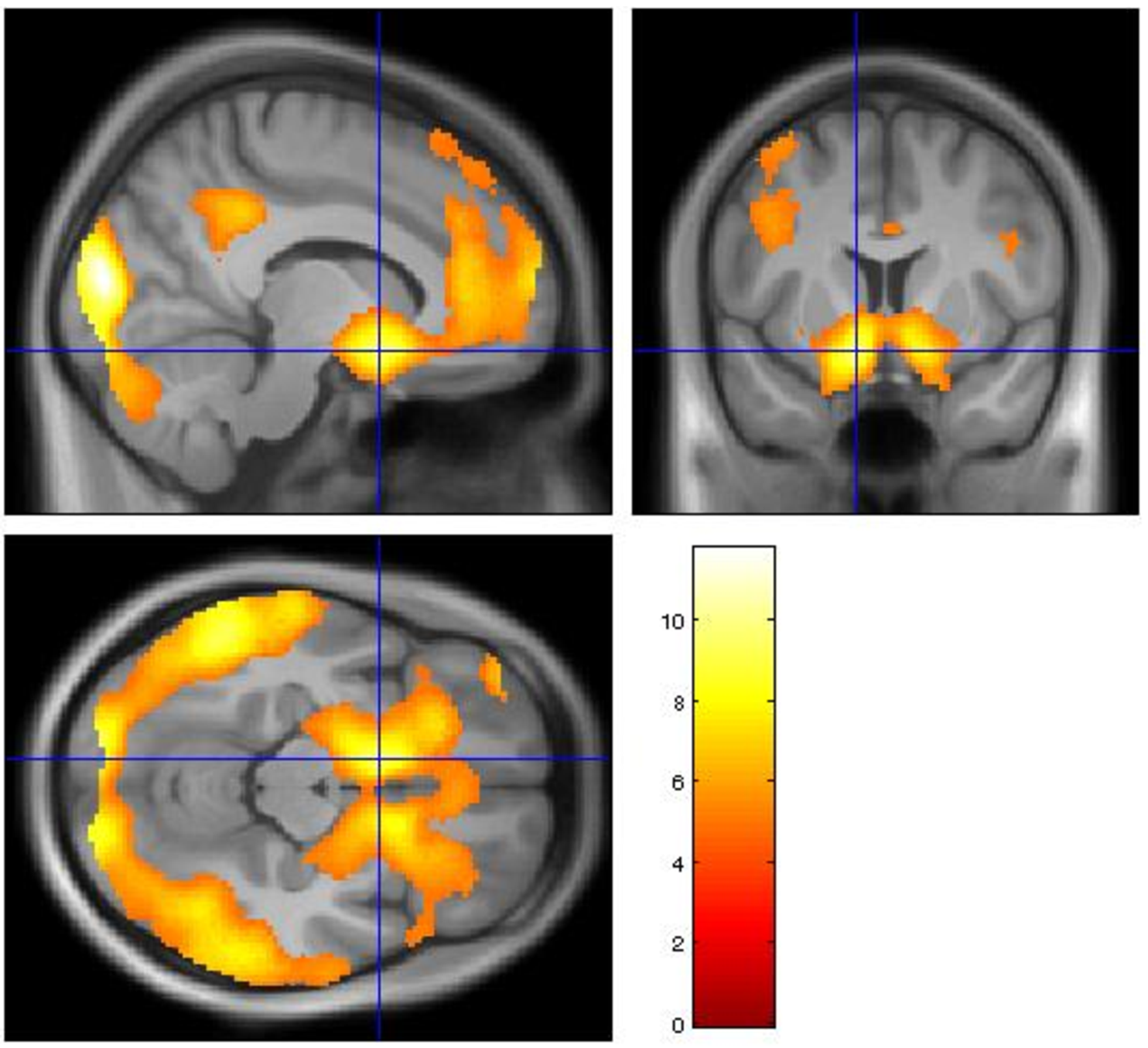}} \label{fig:fmriTD}\ 
 \subfloat[Q-values]{\includegraphics[width=0.45\textwidth]{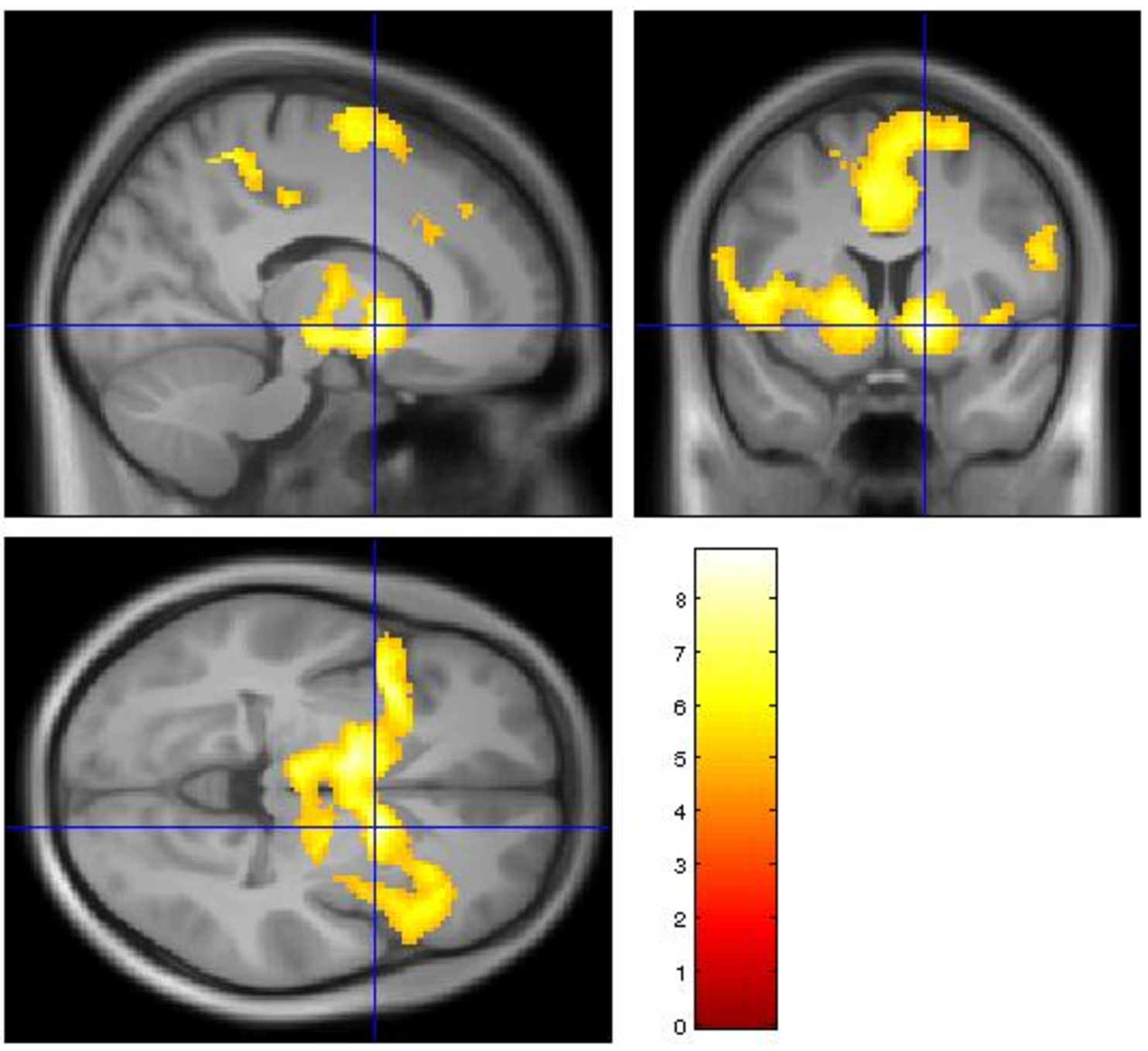}} \label{fig:fmriQ}
 \caption{Modulation of the fMRI BOLD signal by TD errors (a) and by Q-values (b) generated by the RSQL model with best fitting parameters. The data is shown whole-brain corrected to $p<.05$ (voxel-wise $p<.001$ and minimum 125 voxels). The color bar indicates the $t$-value ranging from 0 to the maximal value. The cross indicates location of strongest modulation for TD errors (in the left ventral striatum (-14 8 -16)) and for Q-values (in the right ventral striatum (14 8 -4)). However, it is remarkable that for both TD errors and Q-values, modulations in the left and right ventral striatum are almost equally strong with a slight difference.}
 \label{fig:fmri}
\end{figure}

Fig.~\ref{fig:fmri}a shows that the sequence of TD errors for the RSQL model (with best fitting parameters) positively modulated the BOLD signal in the subcallosal gyrus extending into the ventral striatum  (-14 8 -16) (marked by the cross in Fig.~\ref{fig:fmri}a), the anterior cingulate cortex (8 48 6), and the visual cortex (-8 -92 16; $z=7.9$). The modulation of the BOLD signal in the ventral striatum is consistent with previous experimental findings (cf.\ \citealp{schultz2002getting,o2004reward}), and supports the primary assertion of computational models that reward-based learning occurs when expectations (here, expectations of ``subjective'' quantities) are violated \citep{sutton1998reinforcement}.

Fig.~\ref{fig:fmri}b shows the results for the sequence of Q-values for the RSQL model (with best fitting parameters), which correspond to the subjective (risk-sensitive) expected value of the reward for each discrete choice. Several large clusters of voxels in cortical and subcortical structures were significantly modulated by the Q-values at the moment of choice. The sign of this modulation was negative. 
The peak of this negative modulation occurred in the left anterior insula (-36 12 -2, $z=4.6$ ), with strong modulation also in the
bilateral ventral striatum (14 8 -4, marked by the cross in Fig.~\ref{fig:fmri}b; -16 4 0) and the cingulate cortex (4 16 28). The reward prediction error processed by the ventral striatum (and other regions noted above) would not be computable in the absence of an expectation, and as such, this activation is important for substantiating the plausibility for the RSQL model of learning and choice. Sequences of Q-values obtained with standard Q-learning (with best fitting parameters), on the other hand, failed to predict any changes in brain activity even at a liberal statistical threshold of $p < .01$ (uncorrected). This lack of neural activity for the standard Q model, in combination with the significant activation for our RSQL, supports the hypothesis that some assessment of risk is induced and influences valuation. Whereas the areas modulated by Q-values differ from what has been reported in other studies (i.e., the ventromedial prefrontal cortex as in \citealp{glascher2009determining}), it does overlap with the representation of TD errors. Furthermore, the opposing signs of the correlated neural activity suggests that a neural mismatch of signals in the ventral striatum between Q-value and TD errors may underlie the mechanism by which values are learned.


\subsection{Discussion}
We applied the risk-sensitive Q-learning (RSQL) method to quantify human behavior in a sequential investment game and investigated the correlation of the predicted TD- and Q-values with the neural signals measured by fMRI.

We first showed that the standard Q-learning algorithm cannot explain the behavior of a large number of subjects in the task. Applying RSQL generated a significantly better fit and also outperformed the expected utility algorithm. The risk-sensitivity revealed by the best fitting parameters is consistent with the studies in behavioral economics, that is, subjects are risk-averse for positive while risk-seeking for negative TD errors. Finally, the relative subjective probabilities provide a good explanation why some subjects take conservative policies: they underestimate the true probabilities of gaining rewards. 

The fMRI results showed that TD sequence generated by our model has a significant correlation with the activity in the subcallosal gyrus extending into the ventral striatum. The sequence of Q-values has a significant correlation with the activities in the left anterior insula. 
Previous studies (see e.g., Chapter 23 of \citealp{glimcher2008neuroeconomics} and \citealp{symmonds2011deconstructing}) suggest that higher order statistics of outcomes, e.g., variance (the 2nd order) and skewness (the 3rd order), are encoded in human brains separately and the individual integration of these risk metrics induces the corresponding risk-sensitivity. Our results indicate, however, that the risk-sensitivity can be simply induced (and therefore encoded) by a nonlinear transformation of TD errors and no additional neural representation of higher order statistics is needed. 

\section{Summary}
We applied a family of valuation functions, the utility-based shortfall, to the general framework of risk-sensitive Markov decision processes, and we derived a risk-sensitive Q-learning algorithm. We proved that the proposed algorithm converges to the optimal policy corresponding to the risk-sensitive objective. By applying S-shape utility functions, we show that key features predicted by prospect theory can be replicated using the proposed algorithm. Hence, the novel Q-learning algorithm provides a good candidate model for human risk-sensitive sequential decision-making procedures in learning tasks, where mixed risk-preferences are shown in behavioral studies. We applied the algorithm to model human behaviors in a sequential investment game. The results showed that the new algorithm fitted data significantly better than the standard Q-learning and the expected utility model. 
The analysis of fMRI data shows a significant correlation of the risk-sensitive TD error with the BOLD signal change in the ventral striatum, and also a significant correlation of the risk-sensitive Q-values with neural activity in the striatum, cingulate cortex and insula, which is not present if standard Q-values are applied.

Some technical extensions are possible within our general risk-sensitive reinforcement learning (RL) framework: (a) The Q-learning algorithm derived in this paper can be regarded a special type of RL algorithms, TD(0). It can be extended to other types of RL algorithms like SARSA and TD($\lambda$) for $\lambda \neq 0$. (b) In our previous work \citep{Shen2013}, we also provided a framework for the average case. Hence, RL algorithms for the average case can also be derived similar to the discounted case considered in this paper. (c) The algorithm in its current form applies to MDPs with finite state spaces only. It can be extended for MDPs with  continuous state spaces by applying function approximation technique.

\subsection*{Acknowledgments}
Thanks to Wendelin B\"ohmer, Rong Guo and Maziar Hashemi-Nezhad for useful discussions and suggestions and to the anonymous referee for helpful comments. The work of Y.\ Shen and K.\ Obermayer was supported by the BMBF (Bersteinfokus Lernen TP1), 01GQ0911, and the work of M.J.\ Tobia and T.\ Sommer was supported by the BMBF (Bernsteinfokus Lernen TP2), 01GQ0912.

\appendix

\section{Mathematical Proofs}

The sup-norm is defined as $\lVert X \rVert_\infty := \max_{i \in I} \lvert X(i) \rvert,$
where $X = [X(i)]_{i \in I}$ can be considered as a $\lvert I \rvert$-dimensional vector. 

\begin{lemma}\label{lm:inrm}
Let $\rho$ be valuation function on $\mathbb R^{\lvert I \rvert} \times \mathscr P$ and $\tilde \rho(X, \mu)  := \rho(X, \mu) -\rho(\mathbf 0, \mu)$. Then the following inequality holds
\begin{align*}
 \min_{i \in I} X_i =: \underline{X} \leq \tilde \rho(X, \mu) \leq \overline{X}:= \max_{i \in I} X_i, \forall \mu \in \mathscr P, X \in \mathbb R^{\lvert I \rvert}.
\end{align*}
\end{lemma}
\begin{proof}
 By $\underline{X} \leq X_i \leq \overline{X}, \forall i \in I$ and monotonicity of valuation functions, we obtain
 \begin{align*}
  \rho(\underline{X} \mathbf 1, \mu) \leq \rho(X, \mu) \leq \rho(\overline{X} \mathbf 1, \mu).
 \end{align*}
Due to the translation invariance, we have then
\begin{align*}
 \rho(\overline{X} \mathbf 1, \mu) = \rho(\mathbf 0, \mu) +  \overline{X}, \textrm{ and } \rho(\underline{X} \mathbf 1, \mu) = \rho(\mathbf 0, \mu) +  \underline{X}.
\end{align*}
which immediately imply that
\begin{align*}
 \overline{X} \leq \rho(X, \mu) - \rho(\mathbf 0, \mu) \leq \overline{X}, \forall \mu \in \mathscr P, X \in \mathbb R^{\lvert I \rvert}.
\end{align*}
\end{proof}

\begin{proof}[Proof of Proposition \ref{prop:impl}]
 (ii) $\Rightarrow$ (i). By definition, $m^* \leq \rho_{x_0}^{\textrm{u}}(X)$. For any $\epsilon > 0$, since $u$ is strictly increasing, we have $u(X(i) - m^* - \epsilon) < u(X(\omega) - m^*), \forall i \in I$, which implies $\mathbb E u(X - m^* - \epsilon ) < \mathbb E u(X - m^*) = x_0$. Hence, $m^* = \rho_{x_0}^{\textrm{u}}(X)$. 

(i) $\Rightarrow$ (ii). By definition we have $\mathbb E u(X - m^*) \geq x_0$. Assume that $\mathbb E u(X - m^*) > x_0$. By the continuity of $u$, there exists an $\epsilon > 0$ such that $\mathbb E u(X - m^* - \epsilon) > x_0$, which implies $\rho_{x_0}^{\textrm{u}}(X) \geq m^* + \epsilon > m^*$ and hence contradicts (i). Thus, (ii) holds.
\end{proof}

\subsection{Proofs for Risk-sensitive Q-learning}
\label{sec:proof_rl}
Before proving the risk-sensitive Q-learning, we consider a more general update rule
\begin{align}
 q_{t+1}(i) = (1-\alpha_t(i)) q_t(i) + \alpha_t(i) \left[ (H q_t)(i) + w_t(i) \right]. \label{eq:qiter}
\end{align}
where $q_t \in \mathbb R^d$, $H: \mathbb R^d \rightarrow \mathbb R^d$ is an operator, $w_t$ denotes some random noise term and $\alpha_t$ is learning rate with the understanding that $\alpha_t(i) = 0$ if $q(i)$ is not updated at time $t$. Denote by $\mathcal F_t$ the history of the algorithm up to time $t$,
\begin{align*}
 \mathcal F_t = \{ q_0(i), \ldots, q_t(i), w_0(i), \ldots, w_{t-1}(i), \alpha_0(i), \ldots, \alpha_t(i), i=1,\ldots,t \}.
\end{align*}
We restate the following proposition. 
\begin{proposition}[Proposition 4.4, \citealp{bertsekas1996neuro}] 
\label{prop:sapp}
 Let $q_t$ be the sequence generated by the iteration \eqref{eq:qiter}. We assume the following
\begin{itemize}
 \item[a] The learning rates $\alpha_t(i)$ are nonnegative and satisfy
\begin{align*}
 \sum_{t=0}^\infty \alpha_t(i) = \infty, \quad \sum_{t=0}^\infty \alpha_t^2(i) = \infty, \forall i
\end{align*}
 \item[b] The noise terms $w_t(i)$ satisfy (i) for every $i$ and $t$, $\mathbb E[w_t(i)|\mathcal F_t] = 0$; (ii) Given some norm $\lVert \cdot \rVert$ on $\mathbb R^d$, there exist constants $A$ and $B$ such that $\mathbb E[w_t^2(i)|\mathcal F_t] \leq A + B \rVert q_t \rVert^2$. 
\item[c] The mapping $H$ is a contraction under sup-norm.
\end{itemize}
Then $q_t$ converges to the unique solution $q^*$ of the equation $Hq^* = q^*$ with probability 1. 
\end{proposition}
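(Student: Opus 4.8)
The plan is to prove this as the classical asynchronous stochastic approximation theorem for sup-norm contractions. First I would fix the contraction modulus $\beta \in [0,1)$ of $H$ (so that $\lVert Hq - Hq'\rVert_\infty \le \beta \lVert q - q'\rVert_\infty$) and invoke the Banach fixed-point theorem on the complete space $(\mathbb R^d, \lVert\cdot\rVert_\infty)$ to obtain the unique $q^*$ with $Hq^*=q^*$; convergence to this $q^*$ is the goal. Passing to the error $x_t := q_t - q^*$ and using $Hq^*=q^*$, the recursion \eqref{eq:qiter} becomes
\[
 x_{t+1}(i) = (1-\alpha_t(i))\,x_t(i) + \alpha_t(i)\bigl[s_t(i) + w_t(i)\bigr], \qquad s_t(i) := (Hq_t)(i)-(Hq^*)(i),
\]
where the contraction gives the pointwise bound $\lvert s_t(i)\rvert \le \beta\lVert x_t\rVert_\infty$. (Here the second step-size condition is used in its square-summable form $\sum_t\alpha_t^2(i)<\infty$, which is what drives the stochastic term to zero; this also forces $\alpha_t(i)\to 0$, so $\alpha_t(i)\le 1$ eventually and only the tail of the recursion matters.)

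Next I would exploit linearity of the $(1-\alpha_t)$-averaging to split $x_t = W_t + Y_t$, where $W_t$ collects only the martingale noise, $W_{t+1}(i)=(1-\alpha_t(i))W_t(i)+\alpha_t(i)w_t(i)$ with $W_0=0$, and the remainder obeys the purely deterministic-looking contraction recursion $Y_{t+1}(i)=(1-\alpha_t(i))Y_t(i)+\alpha_t(i)s_t(i)$. The proof then rests on three ingredients. (A) The noise sequence vanishes, $W_t\to 0$ almost surely: this follows from $\mathbb E[w_t(i)\mid\mathcal F_t]=0$ together with square-summability, via a Robbins--Siegmund supermartingale argument applied to $\mathbb E[W_t(i)^2]$, whose increments are controlled by $\alpha_t^2(i)\mathbb E[w_t^2(i)\mid\mathcal F_t]$. (B) The iterates are almost surely bounded, $\sup_t\lVert q_t\rVert_\infty<\infty$. (C) A \emph{shrinking bounds} lemma: if $0\le\alpha_t(i)\le 1$, $\sum_t\alpha_t(i)=\infty$, and $\limsup_t\lvert s_t(i)\rvert\le c$, then the averaging recursion forgets its initial condition (because $\prod_\tau(1-\alpha_\tau(i))\to 0$) and $\limsup_t\lvert Y_t(i)\rvert\le c$.

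Granting (A)--(C), the conclusion is a self-improving $\limsup$ estimate. Writing $D:=\limsup_t\lVert x_t\rVert_\infty$ (finite by (B)), the contraction bound gives $\limsup_t\lvert s_t(i)\rvert\le\beta D$, so (C) yields $\limsup_t\lVert Y_t\rVert_\infty\le\beta D$; since $x_t=Y_t+W_t$ and $W_t\to0$ by (A), we obtain $D=\limsup_t\lVert Y_t+W_t\rVert_\infty\le\beta D$, and $\beta<1$ forces $D=0$, i.e.\ $q_t\to q^*$ almost surely.

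The main obstacle is step (B), and the difficulty is genuinely circular: the variance bound in hypothesis (b)(ii) is $\mathbb E[w_t^2(i)\mid\mathcal F_t]\le A+B\lVert q_t\rVert^2$, so controlling the noise in (A) appears to presuppose the very boundedness one is trying to establish. I would break this loop with the rescaling/bootstrap device of \cite{bertsekas1996neuro}: set $M_t:=\max\{1,\max_{\tau\le t}\lVert q_\tau\rVert_\infty\}$ and study the rescaled iterates $q_t/M_t$, whose driving noise $w_t/M_t$ now has uniformly bounded conditional variance. One shows that the rescaled sequence satisfies the same contraction recursion, hence cannot grow; because the contraction factor $\beta<1$ strictly pulls the iterates inward while the averaging discounts past noise, $M_t$ must stabilize, giving $\sup_t\lVert q_t\rVert_\infty<\infty$ almost surely. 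With boundedness in hand, (A) holds with the now-uniform variance bound, and the argument closes.
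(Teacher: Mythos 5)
The paper offers no proof of this proposition at all---it is restated verbatim from \citet{bertsekas1996neuro} (their Proposition 4.4)---and your outline is precisely the argument given in that cited source: Banach fixed point for uniqueness of $q^*$, the linear split of the error $x_t=q_t-q^*$ into a martingale-noise part $W_t$ and a contracted part $Y_t$, the shrinking-bounds lemma driven by $\sum_t \alpha_t(i)=\infty$ and $\prod_\tau (1-\alpha_\tau(i))\to 0$, almost-sure boundedness via the rescaling bootstrap to break the circularity with the state-dependent variance bound, and finally the self-improving estimate $D\le \beta D$ with $\beta<1$ forcing $D=0$. Two details speak in your favor: you correctly read condition (a) in its square-summable form $\sum_t\alpha_t^2(i)<\infty$ (the paper's restatement contains a typo, writing ``$=\infty$''), and your step (A) is sound provided one also notes $(1-\alpha_t)^2\le 1-\alpha_t$ so that the Robbins--Siegmund limit of $W_t^2$ is actually zero rather than merely existent.
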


To apply Proposition \ref{prop:sapp}, we first reformulate the Q-learning rule \eqref{eq:ql} in a different form
\begin{align*}
 q_{t+1}(s,a) = (1-\frac{\alpha_t(s,a)}{\alpha}) q_t(s,a) + \frac{\alpha_t(s,a)}{\alpha} \left[ \alpha u(d_t) - x_0 + q_t(s,a) \right]
\end{align*}
where $\alpha$ denotes an arbitrary constant such that $\alpha \in (0,\min(L^{-1},1)]$. Recall that $L$ is defined in Assumption \ref{ass:2}. For simplicity, we define $\tilde u(x) := u(x) - x_0$, $d_t := r_t + \gamma \max_{a} q_t(s_{t+1},a) - q_t(s,a)$ and set
\begin{align}
 (Hq_t)(s,a) =& \alpha \mathbb E_{s,a} \tilde u(r_t + \gamma \max_{a} q_t(s_{t+1},a) - q_t(s,a)) + q_t(s,a) \\
 w_t(s,a) = & \alpha \tilde u(d_t) - \alpha \mathbb E_{s,a} \tilde u(r_t + \gamma \max_{a} q_t(s_{t+1},a) - q_t(s,a)) \label{eq:w}
\end{align}
More explicitly, $Hq$ is defined as
\begin{align*}
 (Hq)(s,a) = \alpha \sum_{s', \varepsilon} \tilde{\mathcal P}(s',\epsilon|s,a) \tilde u\left(r(s,a,\varepsilon) + \gamma \max_{a'} q(s',a') - q(s,a)\right) + q(s,a),
\end{align*}
where $\tilde{\mathcal P}(s',\epsilon|s,a) := \mathcal P(s'|s,a) \mathcal P_r(\varepsilon|s,a)$. We assume the size of the space $\mathbf K$ is $d$.
\begin{lemma}\label{lm:cont}
 Suppose that Assumption \ref{ass:2} holds and $0 < \alpha \leq \min(L^{-1},1)$. Then there exists a real number $\bar \alpha \in [0,1)$ such that for all $q,q' \in \mathbb R^d$, $\lVert Hq - H q' \rVert_{\infty} \leq \bar \alpha \lVert q - q'\rVert_\infty$.
\end{lemma}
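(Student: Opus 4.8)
The plan is to fix an admissible pair $(s,a) \in \mathbf K$ and bound $|(Hq)(s,a) - (Hq')(s,a)|$ by $\bar\alpha \lVert q - q'\rVert_\infty$ with a constant $\bar\alpha$ that does not depend on $(s,a)$; taking the maximum over $(s,a)$ then gives the stated inequality. Writing $\delta := \lVert q - q'\rVert_\infty$ and $\Delta(s,a) := q(s,a) - q'(s,a)$, and noting that $\tilde u$ and $u$ differ only by the constant $x_0$ so that $\tilde u(d) - \tilde u(d') = u(d) - u(d')$, the difference of the operators at $(s,a)$ reads
$$(Hq)(s,a) - (Hq')(s,a) = \alpha \sum_{s',\varepsilon} \tilde{\mathcal P}(s',\varepsilon|s,a)\bigl[u(d) - u(d')\bigr] + \Delta(s,a),$$
where $d := r(s,a,\varepsilon) + \gamma\max_{a'}q(s',a') - q(s,a)$ and $d'$ is the analogous quantity formed from $q'$.

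The first key step is to linearise the utility difference. By Assumption \ref{ass:2}(ii) the difference quotient of $u$ lies in $[\epsilon, L]$, so for each $(s',\varepsilon)$ one may write $u(d) - u(d') = \theta_{s',\varepsilon}(d - d')$ with $\theta_{s',\varepsilon} \in [\epsilon, L]$. Next I would compute $d - d' = \gamma M(s') - \Delta(s,a)$, where $M(s') := \max_{a'}q(s',a') - \max_{a'}q'(s',a')$, and invoke the standard nonexpansiveness of the max operator to get $|M(s')| \le \max_{a'}|q(s',a') - q'(s',a')| \le \delta$. Substituting and collecting the terms proportional to $\Delta(s,a)$ yields
$$(Hq)(s,a) - (Hq')(s,a) = \alpha\gamma\sum_{s',\varepsilon}\tilde{\mathcal P}(s',\varepsilon|s,a)\,\theta_{s',\varepsilon}\,M(s') + (1 - \alpha\bar\theta)\,\Delta(s,a),$$
where $\bar\theta := \sum_{s',\varepsilon}\tilde{\mathcal P}(s',\varepsilon|s,a)\,\theta_{s',\varepsilon}$ is a convex combination of the $\theta_{s',\varepsilon}$ (the weights sum to $1$) and hence also lies in $[\epsilon, L]$.

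The decisive point is the sign of the coefficient $1 - \alpha\bar\theta$. Since $\bar\theta \le L$ and $\alpha \le L^{-1}$, we have $\alpha\bar\theta \le 1$, so $1 - \alpha\bar\theta \ge 0$ and the triangle inequality does not waste this term. Factoring $|M(s')| \le \delta$ out of the first sum, whose remaining weight is exactly $\bar\theta$, then gives
$$|(Hq)(s,a) - (Hq')(s,a)| \le \alpha\gamma\bar\theta\,\delta + (1 - \alpha\bar\theta)\,\delta = \bigl[1 - \alpha\bar\theta(1-\gamma)\bigr]\delta.$$
Finally, using $\bar\theta \ge \epsilon$ and $\gamma < 1$, I would set $\bar\alpha := 1 - \alpha\epsilon(1-\gamma)$, which dominates the bracket; it satisfies $\bar\alpha \in [0,1)$ because $\alpha\epsilon(1-\gamma) > 0$ forces $\bar\alpha < 1$, while $\alpha\epsilon(1-\gamma) \le \alpha L \le 1$ forces $\bar\alpha \ge 0$. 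Taking the maximum over $(s,a)$ completes the argument.

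I would flag two places where care is needed. The main obstacle is keeping the coefficient $1 - \alpha\bar\theta$ nonnegative, which is precisely where the hypothesis $\alpha \le \min(L^{-1},1)$ enters; without it the naive bound would retain $|1 - \alpha\bar\theta|$ and might fail to be a contraction. The second point is that strictness $\bar\alpha < 1$ hinges jointly on $\gamma < 1$ and on the strict lower bound $\epsilon > 0$ of Assumption \ref{ass:2}(ii): if either the discounting or the strict monotonicity degenerated, one would recover only nonexpansiveness rather than a genuine contraction.
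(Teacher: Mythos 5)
Your proof is correct and follows essentially the same route as the paper's: linearising $u(d)-u(d')$ via the difference-quotient bounds $[\epsilon,L]$ of Assumption \ref{ass:2}(ii), using nonexpansiveness of the max operator, splitting the difference into the successor-state term and the $(1-\alpha\bar\theta)\,\Delta(s,a)$ term, and arriving at the same constant $\bar\alpha = 1-\alpha\epsilon(1-\gamma)$. Your write-up is in fact slightly more careful than the paper's, since you make explicit both the nonnegativity of $1-\alpha\bar\theta$ (which the paper uses tacitly in its inequality step) and the verification that $\bar\alpha\in[0,1)$.
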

\begin{proof}
Define $v(s):= \max_{a} q(s,a)$ and $v'(s):= \max_{a} q'(s,a)$. Thus, $$\rvert v(s) - v(s) \lvert \leq \max_{(s,a)\in \mathbf K} \rvert q(s,a) - q'(s,a) \lvert = \lVert q - q'\rVert_\infty.$$ By Assumption \ref{ass:2} (ii) and the monotonicity of $\tilde u$, there exists a $\xi_{(x,y)} \in [\epsilon,L]$ such that $\tilde u(x) - \tilde u(y) = \xi_{(x,y)}(x-y)$. Analogously, we obtain
\begin{align*}
  & (Hq)(s,a) - (Hq')(s,a) \\
= &\sum_{s',\varepsilon} \tilde{\mathcal P}(s',\epsilon|s,a) \{ \alpha \xi_{(s,a,\varepsilon,s',q,q')} [\gamma v(s') - \gamma v'(s') - q(s,a) + q'(s,a)]  \\
 &  + (q(s,a) - q'(s,a)) \}\\
 = & \alpha \gamma \sum_{s', \varepsilon} \tilde{\mathcal P}(s',\epsilon|s,a) \xi_{(s,a,\varepsilon,s',q,q')} [v(s') - v'(s')] \\
 & + (1- \alpha \sum_{s', \varepsilon} \tilde{\mathcal P}(s',\epsilon|s,a) \xi_{(s,a,\varepsilon,s',q,q')}) [q(s,a)-q'(s,a)] \\
 \leq & \left(1 - \alpha(1 - \gamma) \sum_{s', \varepsilon} \tilde{\mathcal P}(s',\epsilon|s,a) \xi_{(s,a,\varepsilon,s',q,q')} \right) \lVert q - q'\rVert_\infty \\
 \leq & \left( 1 - \alpha(1 - \gamma) \epsilon \right) \lVert q - q'\rVert_\infty
\end{align*}
Hence, $\bar \alpha = 1 - \alpha (1 - \gamma) \epsilon$ is the required constant.
\end{proof}

\begin{proof}[Proof of Theorem \ref{th:ql}] Obviously, Condition (a) in Proposition \ref{prop:sapp} is satisfied and Condition (c) holds also due to Lemma \ref{lm:cont}. It remains to check Condition (b).

$\mathbb E [w_t(s,a) | \mathcal F_t] = 0$ holds by its definition in \eqref{eq:w}. Next we prove (ii). In fact, 
$$\mathbb E [w_t^2(s,a) | \mathcal F_t] = \alpha^2 \mathbb E \left[(\tilde u(d_t))^2 | \mathcal F_t\right] - \alpha^2 (\mathbb E \left[\tilde u(d_t) | \mathcal F_t\right])^2 \leq \alpha^2 \mathbb E \left[(\tilde u(d_t))^2 | \mathcal F_t\right]$$
Let $\bar R$ be the upper bound for $r_t$. Then $ \lvert d_t \rvert \leq \bar R + 2 \lVert q_t \rVert_\infty$, which implies that $\lvert \tilde u(d_t) -\tilde u(0) \rvert \leq L ( \bar R + 2 \lVert q_t \rVert_\infty)$ due to Assumption \ref{ass:2}(ii). Hence, $\lvert  \tilde u(d_t) \rvert \leq \lvert \tilde u(0) \rvert + L ( \bar R + 2 \lVert q_t \rVert_\infty)$. On the other hand, since
\begin{align*}
 (\lvert \tilde u(0) \rvert + L \bar R + 2 L \lVert q_t \rVert_\infty)^2 \leq 2 (\lvert \tilde u(0) \rvert + L \bar R)^2 + 8L^2 \lVert q_t \rVert_\infty^2
\end{align*}
we have $\alpha^2 \mathbb E \left[(\tilde u(d_t))^2 | \mathcal F_t\right] \leq 2 \alpha^2 (\lvert \tilde u(0) \rvert + L \bar R)^2 + 8 \alpha^2 L^2 \lVert q_t \rVert_\infty^2$. Hence, Condition (b) holds. 
\end{proof}

\subsection{Truncated Algorithms with Weaker Assumptions}
Some functions like $u(x) = e^x$ and $u(x) = x^p$, $p >0$, do not satisfy the global Lipschitz condition required in Assumption \ref{ass:2} (ii). In real applications, however, we can relax the assumption to assume that the Lipschitz condition holds locally within a ``sufficiently large'' subset. Lemma \ref{lm:bound} states such subset provided the upper bound of absolute value of rewards is known.

\begin{assumption}\label{ass:1}
 The reward function $r(s,a,\epsilon)$ is bounded under sup-norm, i.e., $$\bar R := \sup_{(s,a) \in \mathbf K, \epsilon \in \mathbf E} \lvert r(s,a,\epsilon) \rvert < \infty.$$
\end{assumption}

\label{sec:truncate}
Define an operator $\mathcal T: \mathbb R^{\lvert \mathbf S \rvert} \rightarrow \mathbb R^{\lvert \mathbf S \rvert}$ as
\begin{align*}
 \mathcal T_s(V) = \max_{a \in \mathbf A(s)} \mathcal U_{s,a}(R(s,a) + \gamma V).
\end{align*}

\begin{lemma}[cf.~Lemma 5.4, \citealp{Shen2013}]
$\mathcal T$ is a contracting map under sup-norm, i.e., $$\lVert \mathcal T(V) - \mathcal T(V') \rVert_\infty \leq \gamma \lVert V - V' \rVert_\infty, \forall V,V' \in \mathbb R^{\lvert \mathbf S \rvert}.$$ \label{lm:con}
\end{lemma}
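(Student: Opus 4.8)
The plan is to prove the contraction by two successive reductions: first pull the $\max_{a}$ outside the estimate, and then control each valuation function $\mathcal U_{s,a}$ by showing it is nonexpansive in the sup-norm. Fix $V, V' \in \mathbb R^{\lvert \mathbf S\rvert}$ and an arbitrary state $s$. Applying the elementary inequality $\lvert \max_a f(a) - \max_a g(a)\rvert \le \max_a \lvert f(a) - g(a)\rvert$ to $f(a) = \mathcal U_{s,a}(R(s,a)+\gamma V)$ and $g(a) = \mathcal U_{s,a}(R(s,a)+\gamma V')$, I would reduce the problem to bounding, for each admissible $a \in \mathbf A(s)$, the quantity $\lvert \mathcal U_{s,a}(R(s,a)+\gamma V) - \mathcal U_{s,a}(R(s,a)+\gamma V')\rvert$.

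The key step is a nonexpansiveness lemma: every valuation function is $1$-Lipschitz under the sup-norm, i.e.\ $\lvert \rho(X,\mu)-\rho(Y,\mu)\rvert \le \lVert X - Y\rVert_\infty$. I would derive this directly from the two axioms, in the same spirit as Lemma \ref{lm:inrm}. Setting $c := \lVert X-Y\rVert_\infty$, one has the coordinatewise sandwich $Y - c\mathbf 1 \le X \le Y + c\mathbf 1$, so monotonicity (Axiom I) followed by translation invariance (Axiom II) gives $\rho(X,\mu) \le \rho(Y + c\mathbf 1,\mu) = \rho(Y,\mu)+c$ and, symmetrically, $\rho(Y,\mu)\le \rho(X,\mu)+c$; together these yield $\lvert \rho(X,\mu)-\rho(Y,\mu)\rvert \le c$. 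Since $\mathcal U_{s,a}(\cdot)$ is a valuation function for each fixed $(s,a) \in \mathbf K$, this estimate applies verbatim to each term in the reduced bound.

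It then remains to account for the discount factor, which is where one must be careful about how the $\lvert \mathbf S\rvert$-vector $V$ is embedded into the $\lvert I\rvert$-dimensional argument of $\mathcal U_{s,a}$, with $I = \mathbf S\times \mathbf E$. The arguments $R(s,a)+\gamma V$ and $R(s,a)+\gamma V'$ have coordinates $r(s,a,\varepsilon)+\gamma V(s')$ and $r(s,a,\varepsilon)+\gamma V'(s')$ at the event $(s',\varepsilon)$, so the reward terms cancel and the sup-norm of their difference is
\begin{align*}
 \lVert (R(s,a)+\gamma V) - (R(s,a)+\gamma V')\rVert_\infty = \gamma \max_{s'\in\mathbf S}\lvert V(s') - V'(s')\rvert = \gamma \lVert V - V'\rVert_\infty ,
\end{align*}
the noise coordinate $\varepsilon$ being irrelevant since $V$ does not depend on it. Combining the nonexpansiveness lemma with this identity bounds each term by $\gamma \lVert V - V'\rVert_\infty$, uniformly in $a$; feeding this back through the $\max_a$ reduction and then taking the maximum over $s$ yields $\lVert \mathcal T(V) - \mathcal T(V')\rVert_\infty \le \gamma\lVert V-V'\rVert_\infty$, as required.

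The main obstacle is not any single hard estimate but rather the nonexpansiveness lemma together with the bookkeeping of the embedding: one must recognize that the reward contributions are identical in both arguments, so that only the $\gamma$-scaled value difference survives. This is precisely what produces the contraction modulus $\gamma$ rather than $1$, and it is the only place where the discount factor enters the argument.
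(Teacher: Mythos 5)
Your proof is correct: the elementary max-inequality, the nonexpansiveness of any valuation function derived from monotonicity plus translation invariance, and the observation that the reward terms cancel so only the $\gamma$-scaled value difference survives, together give exactly the claimed contraction. The paper itself offers no proof here (it defers to Lemma 5.4 of \citealp{Shen2013}), and your argument is essentially the standard one used in that reference, so there is nothing to add.
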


\begin{lemma}\label{lm:bound}
 Under Assumption \ref{ass:2} (i) and \ref{ass:1}, applying the valuation map in \eqref{eq:opt}, the solution $Q^*$ satisfies $ \frac{-\bar R - y_0}{1-\gamma} \leq Q^*(s,a) \leq \frac{\bar R - y_0}{1-\gamma}, \forall (s,a) \in \mathbf K.$
\end{lemma}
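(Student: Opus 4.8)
The plan is to argue directly from the fixed-point equation \eqref{eq:opt}, using only three facts: strict monotonicity of $u$ together with the normalization $u(y_0)=x_0$ from Assumption \ref{ass:2}(i), the uniform reward bound $\bar R<\infty$ from Assumption \ref{ass:1}, and the strict contraction factor $1-\gamma>0$. Since $\mathbf S,\mathbf A,\mathbf E$ are finite, $\mathbf K$ is finite, so $M:=\max_{(s,a)\in\mathbf K}Q^*(s,a)$ and $m:=\min_{(s,a)\in\mathbf K}Q^*(s,a)$ are attained. The elementary but essential observation is that $m\le V^*(s')=\max_{a'}Q^*(s',a')\le M$ for every $s'\in\mathbf S$, which lets me control the continuation value $\gamma V^*(s')$ appearing inside $u$. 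I also use that the weights in \eqref{eq:opt} sum to one, since $\sum_{s',\varepsilon}\mathcal P(s'|s,a)\mathcal P_r(\varepsilon|s,a)=1$, so the weighted sum of the $u$-terms is a convex combination.

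For the upper bound I would pick a pair attaining $M$ and substitute into \eqref{eq:opt}. Each argument of $u$ satisfies $r(s,a,\varepsilon)+\gamma V^*(s')-M\le \bar R+\gamma M-M=\bar R-(1-\gamma)M$, so monotonicity of $u$ and the convex-combination property give $x_0\le u\left(\bar R-(1-\gamma)M\right)$. Writing $x_0=u(y_0)$ and inverting the strictly increasing $u$ yields $y_0\le \bar R-(1-\gamma)M$, i.e. $M\le(\bar R-y_0)/(1-\gamma)$. The lower bound is the mirror image: at a minimizing pair, each argument satisfies $r(s,a,\varepsilon)+\gamma V^*(s')-m\ge -\bar R+\gamma m-m=-\bar R-(1-\gamma)m$, whence $x_0\ge u\left(-\bar R-(1-\gamma)m\right)$ and therefore $m\ge(-\bar R-y_0)/(1-\gamma)$. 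Since $m\le Q^*(s,a)\le M$ for every $(s,a)\in\mathbf K$, the two claimed inequalities follow at once.

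The one delicate point — the real obstacle — is the self-referential structure: any bound on $Q^*(s,a)$ feeds through $V^*$ back into $Q^*$. Passing to the extremal pair over the finite set $\mathbf K$ and then dividing by $1-\gamma>0$ is precisely what breaks the recursion; note that the argument degenerates at $\gamma=1$, so the discounted setting is indispensable. I would remark that the same result can be obtained more abstractly from Lemma \ref{lm:inrm}: under Assumption \ref{ass:2}(i) one computes $\mathcal U_{s,a}(\mathbf 0)=\sup\{m\mid u(-m)\ge x_0\}=-y_0$, so the centralized range estimate \eqref{eq:range} gives $\underline{X}-y_0\le \mathcal U_{s,a}(X)\le \overline{X}-y_0$ with $X=R(s,a)+\gamma V^*$, and taking $\max$ (resp. $\min$) over $\mathbf K$ reproduces exactly the recursion $M(1-\gamma)\le \bar R-y_0$ and $m(1-\gamma)\ge -\bar R-y_0$. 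I would nevertheless present the direct computation from \eqref{eq:opt}, as it is fully self-contained and makes the role of strict monotonicity transparent.
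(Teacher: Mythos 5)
Your proof is correct, and it takes a genuinely different route from the paper's. The paper proves the bound dynamically: it invokes the contraction property of the operator $\mathcal T_s(V) = \max_{a} \mathcal U_{s,a}(R(s,a)+\gamma V)$ (Lemma \ref{lm:con}, imported from \citealp{Shen2013}) to write $V^* = \mathcal T^\infty(0)$, computes $\mathcal U_{s,a}(\mathbf 0) = -u^{-1}(x_0) = -y_0$, bounds $\mathcal T(0)$ between $\underline u := -y_0 - \bar R$ and $\bar u := -y_0 + \bar R$ via the range estimate \eqref{eq:range}, and then propagates these bounds through the iteration to get $(1+\gamma+\cdots+\gamma^{n-1})\,\underline u \leq \mathcal T^n(0) \leq (1+\gamma+\cdots+\gamma^{n-1})\,\bar u$, summing the geometric series in the limit. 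You instead argue statically on the fixed point itself: take the extremal values $M$ and $m$ of $Q^*$ over the finite set $\mathbf K$, substitute into the scalar equation \eqref{eq:opt}, and exploit that the expectation is a convex combination together with strict monotonicity of $u$ and $u(y_0)=x_0$ to extract $(1-\gamma)M \leq \bar R - y_0$ and $(1-\gamma)m \geq -\bar R - y_0$. What your route buys is self-containedness: it needs neither Lemma \ref{lm:con} nor the abstract valuation-function axioms, and it shows that \emph{any} solution of \eqref{eq:opt} obeys the bound, working directly at the level of Q-values (the paper's final step, passing from the bound on $V^*$ to $Q^*$, actually requires one more silent application of the range estimate). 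What the paper's route buys is that the bound holds uniformly along the value-iteration sequence $\mathcal T^n(0)$ — not just at the fixed point — and that existence of $V^*$ comes out of the same contraction argument rather than being presupposed; your version presupposes existence, which is legitimate since the lemma statement refers to ``the solution $Q^*$.'' Your closing remark correctly identifies that the abstract route via \eqref{eq:range} and $\mathcal U_{s,a}(\mathbf 0) = -y_0$ recovers the same recursion; that variant is essentially the paper's ingredients rearranged around your extremal argument.
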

\begin{proof}
By assumption, $u^{-1}(x_0)$ exists. Since $u$ is strictly increasing, we have $\mathcal U_{s,a}(0) = \sup \{ m \in \mathbb{R} | u(-m) \geq x_0 \} = - u^{-1}(x_0)$. Hence, together with Eq.~\eqref{eq:range}, we obtain for all $(s,a) \in \mathbf K$,
\begin{align*}
 - u^{-1}(x_0) - \bar R = \mathcal U_{s,a}(0) - \bar R \leq \mathcal U_{s,a}(R) \leq \mathcal U_{s,a}(0) + \bar R = - u^{-1}(x_0)+ \bar R 
\end{align*}
Note that Lemma \ref{lm:con} implies that $V^* = \mathcal T^\infty(V_0)$ for any $V_0 \in \mathbb R^{\lvert \mathbf S \rvert}$. Without loss of generality, we start from $V_0 = 0$. Define $\underline u := - u^{-1}(x_0) - \bar R$ and $\bar u := - u^{-1}(x_0) + \bar R$. Hence, we have $\underline u \leq \mathcal T(0) = \max_{a} \mathcal U_{s,a}(R) \leq \bar u$, which implies 
\begin{align*}
 &\mathcal T^2(0) = \max_{a} \mathcal U_{s,a}(R + \gamma \mathcal T(0)) \leq \max_{a} \mathcal U_{s,a}(R ) + \gamma \bar u \leq (1+\gamma) \bar u \\
\textrm{and} \quad & \mathcal T^2(0) = \max_{a} \mathcal U_{s,a}(R + \gamma \mathcal T(0)) \geq \max_{a} \mathcal U_{s,a}(R ) + \gamma \underline u \geq (1+\gamma) \underline u& 
\end{align*}
Repeating above procedure, we obtain $(1+\gamma +\ldots + \gamma^{n-1}) \underline u \leq \mathcal T^n(0) \leq (1+\gamma +\ldots + \gamma^{n-1}) \bar u$. Hence, $\frac{\underline u}{1 - \gamma} \leq V^* = \mathcal T^\infty(0) \leq \frac{\bar u}{1 - \gamma}$. 
By the definition of $Q^*$, above inequalities hold for $Q^*$ as well. 
\end{proof}
Define
\begin{align}
 \underline x := y_0 - \frac{2 \bar R}{1-\gamma} \quad \textrm{and} \quad \bar x:= y_0 + \frac{2 \bar R }{1-\gamma} \label{eq:trun}
\end{align}
Given Lemma \ref{lm:bound}, we can truncate the utility function $u$ outside the interval $[\underline x, \overline x]$ as
\begin{align}
 u'(x) = \left\{ \begin{array}{ll}
                 u(\underline x) + \epsilon(x - \underline x), & x \in (-\infty, \underline x)\\
                 u(x), & x \in [\underline{x}, \bar x]\\
                 u(\bar x) + \epsilon(x - \bar x), & x \in (\bar x, \infty)
                \end{array}
\right.. \label{eq:truncate}
\end{align}

\begin{theorem}\label{th:truncate}
 Suppose that Assumption \ref{ass:2} (i) and \ref{ass:1} hold. Assume further that There exist positive constants $\epsilon, L\in \mathbb R^+$ such that $ 0 < \epsilon \leq \frac{u(x) - u(y)}{x-y} \leq L $, for all $x \neq y \in [\underline x, \bar x]$, where $\underline x, \overline x$ are defined in Eq.~\eqref{eq:trun}. Then the unique solution $Q^*_1$ to Eq.~\eqref{eq:opt} with $u$ and the unique solution $Q^*_2$ to Eq.~\eqref{eq:opt} with $u'$ are identical.
\end{theorem}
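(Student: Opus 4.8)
The plan is to show that the truncation of $u$ outside the window $[\underline x, \bar x]$ is invisible to equation \eqref{eq:opt}, in the sense that the argument fed into the utility function, evaluated at the solution $Q^*_1$ of the original problem, never leaves that window. Since $u'$ agrees with $u$ on $[\underline x, \bar x]$ by the construction \eqref{eq:truncate}, this will immediately imply that $Q^*_1$ is also a solution of \eqref{eq:opt} with $u$ replaced by $u'$. Uniqueness of the $u'$-solution then forces $Q^*_1 = Q^*_2$. So the proof decomposes into three tasks: establishing existence and uniqueness of both solutions, confining the temporal-difference argument, and concluding by uniqueness.

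First I would verify that both $Q^*_1$ and $Q^*_2$ are well-defined and unique. Since $u$ is continuous and strictly increasing, its shortfall is a valid valuation map, so the operator $\mathcal T$ is a contraction by Lemma \ref{lm:con} and $Q^*_1$ exists uniquely. The truncated $u'$ is continuous, strictly increasing (its slope is bounded below by $\epsilon$ everywhere, including across the junctions), satisfies $u'(y_0) = u(y_0) = x_0$ because $y_0 \in (\underline x, \bar x)$, and is globally Lipschitz with constants $\epsilon, L$; hence $u'$ satisfies all of Assumption \ref{ass:2}, so $Q^*_2$ exists uniquely as well (via Lemma \ref{lm:cont} and Theorem \ref{th:ql}). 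Because Assumption \ref{ass:2}(i) and Assumption \ref{ass:1} hold, Lemma \ref{lm:bound} applies to $Q^*_1$ and gives $\tfrac{-\bar R - y_0}{1-\gamma} \le Q^*_1(s,a) \le \tfrac{\bar R - y_0}{1-\gamma}$ for all $(s,a) \in \mathbf K$.

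The crux, and the step I expect to be the main point of the argument, is bounding the temporal-difference argument $d^*(s,a,s',\varepsilon) := r(s,a,\varepsilon) + \gamma \max_{a'} Q^*_1(s',a') - Q^*_1(s,a)$. Using $\lvert r \rvert \le \bar R$ together with the upper bound on $\max_{a'} Q^*_1$ and the lower bound on $Q^*_1$ from Lemma \ref{lm:bound}, I would compute the largest possible value of $d^*$ and check that it equals exactly $\bar x = y_0 + \tfrac{2\bar R}{1-\gamma}$; symmetrically, the smallest possible value equals exactly $\underline x = y_0 - \tfrac{2\bar R}{1-\gamma}$. Thus $d^*(s,a,s',\varepsilon) \in [\underline x, \bar x]$ for all admissible $(s,a)$, all $s'$, and all $\varepsilon$. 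The thresholds in \eqref{eq:trun} are calibrated precisely so that these two bounds are tight, which is why the truncation window is exactly the right size.

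Finally, since $u'(x) = u(x)$ for every $x \in [\underline x, \bar x]$, the equality $u(d^*) = u'(d^*)$ holds at every term appearing in \eqref{eq:opt} when evaluated at $Q^*_1$. Hence $Q^*_1$ satisfies \eqref{eq:opt} with $u'$ in place of $u$, i.e.\ it is \emph{a} solution of the $u'$-equation; by the uniqueness established above, $Q^*_1 = Q^*_2$, as claimed. Everything outside the range computation of the previous paragraph is a direct invocation of the contraction lemmas and of Lemma \ref{lm:bound}, so the algebra confirming the tightness of the bounds is the only genuinely delicate part.
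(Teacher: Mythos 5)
Your proposal is correct and takes essentially the same route as the paper's own proof: apply Lemma \ref{lm:bound} to confine the temporal-difference argument $r + \gamma Q^*(s',a') - Q^*(s,a)$ to exactly $[\underline x, \bar x]$, note that $u$ and $u'$ agree there, and conclude by uniqueness. The only (immaterial) difference is that the paper bounds the TD arguments of both $Q^*_1$ and $Q^*_2$ symmetrically, whereas you bound only that of $Q^*_1$ and then invoke uniqueness of the $u'$-solution.
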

\begin{proof}
 Both uniqueness is due to Theorem \ref{th:vi} and Proposition \ref{prop:impl}. By Lemma \ref{lm:bound}, $\frac{-\bar R - y_0}{1-\gamma} \leq Q^*_i(s,a) \leq \frac{\bar R - y_0}{1-\gamma}$ hold for all $(s,a) \in \mathbf K$ and $i = 1, 2$. Hence, we have for both $i = 1, 2$ and for all $(s,a), (s',a') \in \mathbf K, \epsilon \in \mathbf E$, 
 \begin{align*}
  y_0 - \frac{2 \bar R}{1-\gamma} \leq r(s,a.\epsilon) + \gamma Q^*_i(s',a') - Q^*_i(s,a) \leq y_0 + \frac{2 \bar R }{1-\gamma}.
 \end{align*}
Since $u$ and $u'$ are identical within the set $[\underline x, \bar x]$, $Q^*_1(s,a) = Q^*_2(s,a)$ for all $(s,a) \in \mathbf K$.
\end{proof}

Now we state the risk-sensitive Q-learing algorithm with truncation.
\begin{algorithm}[ht]                      
\caption{Q-learning with truncation}          
\label{alg:ql2}                          
\begin{algorithmic}
\State initialize $Q(s,a) = 0$ and $N(s,a) = 0$ for all $s,a$.
\For{$t=1$ to $T$}
 \State at state $s_t$ choose action $a_t$ randomly using a proper policy (e.g.\ Eq.~\eqref{eq:softmax});
\State observe date $(s_t,a_t,r_t, s_{t+1})$;
 \State $N(s_t,a_t) \Leftarrow N(s_t,a_t) + 1$ and set learning rate: $\alpha_t := 1/N(s_t,a_t)$;
 \State update $Q$ as in Eq.~\eqref{eq:ql};
 \State truncate $Q$ as in Eq.~\eqref{eq:truncate},  
 where $\bar x$ and $\underline x$ are defined in Eq.~\eqref{eq:trun}.
\EndFor
\end{algorithmic}
\end{algorithm}

\subsection{Heuristics for Polynomial Utility Functions} \label{sec:heuristics}
So far we have relaxed the assumption for utility functions to locally Lipschitz. However, some functions of interest are even not locally Lipschitz. For instance, the function $u(x) = x^p$, $p \in (0,1)$ is not Lipschitz at the area close to 0. We suggest two types of approximation to avoid this problem.
\begin{enumerate}
 \item Approximate $u$ by $u^\varphi(x) = (x + \varphi)^p - \varphi^p$ with some positive $\varphi$.
 \item Approximate $u$ close to 0 by a linear function, i.e.
\begin{align*}
 u^\varphi(x) = \left\{\begin{array}{ll}
                 u(x) & x \geq \varphi \\
                 \frac{x u(\varphi)}{\varphi} & x \in [0,\varphi)
                \end{array}
\right..
\end{align*}
\end{enumerate}
In both cases, $\varphi$ should be set very close to 0.

The assumption in Theorem \eqref{th:truncate} and Assumption \ref{ass:2} (ii) requires also the strictly positive lower bound $\epsilon$. This causes problem when applying $u(x) = x^p$, $p > 1$ at the area close to 0. We can again apply above two approximation schemes to overcome the problem by selecting small $\varphi$. In Section \ref{sec:ex}, for both $p > 1$ and $p \in (0,1)$, we apply the second scheme to ensure Assumption \ref{ass:2}.

\subsection{Softmax Policy}
\label{sec:softmax}
Recall that we call a policy is proper, if under such policy every state is visited infinitely often. In this subsection, we show that under some technical assumptions the softmax policy (cf.~Eq.\ \eqref{eq:softmax}) is proper. A policy $\boldsymbol \pi  = [\pi_0, \pi_1, \ldots]$ is deterministic if for all state $s$ and $t$, there exists an action $a \in \mathbf A(s)$ such that $\pi_t(a|s) = 1$. Under one policy $\boldsymbol \pi$, the $n$-step transition probability $P^{\boldsymbol \pi}(S_{n} = s'| S_0 = s)$ for some $s, s' \in \mathbb S$ can be calculated as follows
\begin{align*}
 P^{\boldsymbol \pi}(S_{n} = s'| S_0 = s) = \sum_{S_1, S_2, \ldots, S_{n-1}} P^{\pi_0} (S_1|s) P^{\pi_1} (S_2|S_1) \ldots P^{\pi_{n-1}}(s'|S_{n-1})
\end{align*}
where $P^\pi(y|x) := \sum_{a} \mathcal P(y|x,a)\pi(a|x)$ and $\mathcal P$ is the transition kernel of the underlying MDP. 

\begin{proposition}
 Assume that the state and action space are finite and the assumptions required by Theorem \ref{th:ql} hold. Assume further that for each $s, s' \in \mathbf S$, there exist a deterministic policy $\boldsymbol \pi_d$, $n \in \mathbb N$ and a positive $\epsilon >0$ such that $P^{\boldsymbol \pi_d}(S_{n} = s'| S_0 = s) > \epsilon$. Then the softmax policy stated in Eq.~\eqref{eq:softmax} is proper.  
\end{proposition}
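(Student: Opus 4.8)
The plan is to exploit the single distinguishing feature of the softmax rule~\eqref{eq:softmax}: since $e^{\beta Q(s,a)} > 0$ for every finite $Q$, it never assigns probability zero to an admissible action, and I would turn this into a lower bound that is \emph{uniform in time}. Because the state, action and noise spaces are finite and rewards are bounded (cf.\ Assumption \ref{ass:1}), the $Q$-iterates stay in a fixed compact set -- explicitly so for the truncated Algorithm \ref{alg:ql2}, whose iterates obey the bounds of Lemma \ref{lm:bound} -- so there is a constant $M$ with $\lvert Q_t(s,a) \rvert \le M$ for all $t$ and all $(s,a)$. Hence at every time step and every state $s$,
\begin{align*}
 p(a \mid s) = \frac{e^{\beta Q_t(s,a)}}{\sum_{a'} e^{\beta Q_t(s,a')}} \ge \frac{e^{-\beta M}}{\lvert \mathbf A \rvert\, e^{\beta M}} = \frac{e^{-2\beta M}}{\lvert \mathbf A \rvert} =: \delta > 0,
\end{align*}
a bound independent of $t$ and of the current $Q$-values. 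Securing this uniform $\delta$ is the step I expect to be the main obstacle, since it rests on the iterates staying bounded; this must be argued from the bounded rewards and discounting (and is automatic under truncation) to avoid circular reliance on the convergence result that properness is meant to feed into Theorem \ref{th:ql}.

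Next I would compare the softmax chain with the deterministic policy furnished by the hypothesis. Fix $s,s' \in \mathbf S$ and let $\boldsymbol \pi_d = [\pi_0^d, \pi_1^d, \ldots]$, $n$ and $\epsilon$ be as assumed, so each $\pi_\tau^d$ selects a single action $a_\tau^d(x)$ at state $x$. For the softmax single-step kernel $P^{\pi_\tau}(y \mid x) = \sum_a \mathcal P(y \mid x,a)\, p(a \mid x)$ the uniform bound gives $P^{\pi_\tau}(y \mid x) \ge \delta\, \mathcal P(y \mid x, a_\tau^d(x)) = \delta\, P^{\pi_\tau^d}(y \mid x)$ for all $x,y$. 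Substituting this into the path-sum expansion of the $n$-step probability stated before the proposition and extracting one factor $\delta$ at each of the $n$ steps yields
\begin{align*}
 P^{\boldsymbol \pi}(S_n = s' \mid S_0 = s) \ge \delta^n\, P^{\boldsymbol \pi_d}(S_n = s' \mid S_0 = s) > \delta^n \epsilon > 0.
\end{align*}
Since $\mathbf S$ is finite I can set $N := \max_{s,s'} n(s,s')$ and $p_{\min} := \min_{s,s'} \delta^{\,n(s,s')} \epsilon(s,s') > 0$; then for any target $s^*$ and any starting state $s$ the probability of visiting $s^*$ within $N$ steps is at least $P^{\boldsymbol \pi}(S_{n(s,s^*)} = s^* \mid S_0 = s) \ge p_{\min}$, because $S_{n(s,s^*)} = s^*$ is one way of visiting $s^*$ within $N$ steps.

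Finally I would upgrade this reachability estimate to almost-sure recurrence. The process is time-inhomogeneous, as the policy tracks the evolving $Q$-values, so rather than a stationary Markov-chain argument I would slice time into consecutive blocks of length $N$ and let $B_k$ be the event that $s^*$ is visited during the $k$-th block. Conditioning on the state at the start of the block, the estimate above gives $P(B_k \mid \mathcal F_{kN}) \ge p_{\min}$ irrespective of the past, so $\sum_k P(B_k \mid \mathcal F_{kN}) = \infty$ almost surely. The conditional (L\'evy) form of the second Borel--Cantelli lemma then forces $B_k$ to occur infinitely often with probability one, whence $s^*$ is visited infinitely often. As $s^*$ was arbitrary and $\mathbf S$ is finite, every state is visited infinitely often almost surely, i.e.\ the softmax policy is proper.
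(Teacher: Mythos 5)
Your proof is correct and takes essentially the same route as the paper's: a time-uniform lower bound $\delta>0$ on the softmax action probabilities (obtained from boundedness of the $Q$-iterates), the step-by-step comparison $P^{\boldsymbol \pi}(S_n = s' \mid S_0 = s) \geq \delta^n P^{\boldsymbol \pi_d}(S_n = s' \mid S_0 = s) > \delta^n \epsilon > 0$, and the conclusion that every state is visited infinitely often. The only difference is one of rigor: where the paper ends with the bare assertion ``it implies that each state will be visited infinitely often,'' you justify that step explicitly via blocks of length $N$ and the conditional (L\'evy) Borel--Cantelli lemma, and you are also more careful than the paper about why the iterates stay bounded without circular appeal to the convergence theorem.
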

\begin{proof}
 Due to the contraction property of $Q$ (see Lemma \ref{lm:cont}), $\{Q_t\}$ is uniformly bounded w.r.t.~$t$.  Let $\boldsymbol \pi_s = [\pi_0, \pi_1, \ldots]$ be a softmax policy associated with $\{Q_t\}$. Then, by the definition of softmax policies (see Eq.~\eqref{eq:softmax}), there exists a positive $\epsilon_0 > 0$ such that $\pi_t(a|s) \geq \epsilon_0$ holds for each $(s,a) \in \mathbf K$ and $t \in \mathbb N$. It implies that for each $s,s' \in \mathbf S$, 
 \begin{align*}
  P^{\boldsymbol \pi_s}(S_{n} = s'| S_0 = s) \geq \epsilon_0^n P^{\boldsymbol \pi_d}(S_{n} = s'| S_0 = s),
 \end{align*}
 for any deterministic policy $\boldsymbol \pi_d$. Then by the assumption of this proposition, we obtain that for each $s,s' \in \mathbf S$, $P^{\boldsymbol \pi_s}(S_{n} = s'| S_0 = s) \geq \epsilon_0^n \epsilon > 0$. It implies that each state will be visited infinitely often.
\end{proof}

The MDP applied in the behavioral experiment in Section \ref{sec:ex} satisfies above assumptions, since for each $s, s' \in \mathbf S$, there exists a deterministic policy $\boldsymbol \pi_d$ such that $P^{\boldsymbol \pi_d}(S_{n} = s'| S_0 = s) = 1$, $n\leq 4$, no matter which initial state $s$ we start with. 

\section{Magnetic Resonance Protocol and Data Processing}
\label{sec:mr}
Magnetic resonance (MR) images were acquired with a 3T whole-body MR system (Magnetom TIM Trio,
Siemens Healthcare) using a 32-channel receive-only head coil. Structural MRI were acquired
with a T1 weighted magnetization-prepared rapid gra\-dient-echo (MPRAGE) sequence with a
voxel resolution of $1\times 1 \times 1 \textrm{ mm}^3$, coronal orientation, phase-encoding in left-right direction, FoV
= $192 \times 256$ mm, 240 slices, 1100 ms inversion time, TE = 2.98 ms, TR = 2300 ms, and 90 flip
angle. Functional MRI time series were recorded using a T2* GRAPPA EPI sequence with TR =
2380 ms, TE = 25 ms, anterior-posterior phase encode, 40 slices acquired in descending (non-
interleaved) axial plane with $2\times 2 \times 2 \textrm{ mm}^3$ voxels ($204 \times 204$ mm FoV; skip factor = .5), with an
acquisition time of approximately 8 minutes per scanning run.

Structural and functional magnetic resonance image analyzes were conducted in SPM8 (Wellcome
Department of Cognitive Neurology, London, UK). Anatomical images were segmented and
transformed to Montreal Neurological Institute (MNI) standard space, and a group average T1
custom anatomical template image was generated using DARTEL. Functional images were
corrected for slice-timing acquisition offsets, realigned and corrected for the interaction of
motion and distortion using unwarp toolbox, co-registered to anatomical images and transformed
to MNI space using DARTEL, and finally smoothed with an 8 mm FWHM isotropic Gaussian
kernel.

Functional images were analyzed using the general linear model (GLM) implemented in SPM8. First level analyzes included onset regressors for each stimulus event excluding the anticipation phase (see Fig.\ \ref{fig:mdp}a), and a set of parametric modulators corresponding to trial-specific task outcome variables and computational model parameters. Trial-specific task outcome variables (and their corresponding stimulus event) include the choice value of the investment (choice phase) and the total value of rewards (gains/losses) over each round (corresponding to multi-trial feedback event). Model derived parametric modulators included the time series of Q values for the selected action (choice phase), TD (outcome phase). Reward value was not modeled as a parametric modulator because the TD error time series and trial-by-trial reward values were strongly correlated (all rs $> .7$; ps $<.001$). The configuration of the first-level GLM regressors for the standard Q-learning model was identical to that employed in the risk-sensitive Q-learning model. All regressors were convolved with a canonical hemodynamic response function. Prior to model estimation, coincident parametric modulators were serially orthogonalized as implemented in SPM (i.e., the Q-value regressor was orthogonalized with respect to the choice
value regressor). In addition, we included a set of regressors for each participant to censor EPI
images with large, head movement related spikes in the global mean. These first level beta
values were averaged across participants and tested against zero with a t-test. Monte Carlo simulations determined that a cluster of more than 125 contiguous voxels with a single-voxel threshold of $p<.001$ achieved a corrected $p$-value of $.05$.

\end{document}